\renewcommand{\qed}{\hfill\Halmos}
\newcommand{\RR}{\mathbb{R}}
\newcommand{\NN}{\mathbb{N}}
\newcommand{\st}{\mathrel{}\middle|\mathrel{}}
\newcommand{\abs}[1]{\lvert#1\rvert}
\newcommand{\cupdot}{\mathbin{\mathaccent\cdot\cup}}
\newcommand{\fin}{\mathbf{f}_\mathrm{in}}
\newcommand{\mysin}{\mathbf{s}_\mathrm{in}}
\newcommand{\pin}{\mathbf{p}_\mathrm{in}}
\newcommand{\fout}{\mathbf{f}_\mathrm{out}}
\newcommand{\gin}{\mathbf{g}_\mathrm{in}}
\newcommand{\gout}{\mathbf{g}_\mathrm{out}}
\newcommand{\pstarin}{\mathbf{p}^*_\mathrm{in}}
\newcommand{\pstarout}{\mathbf{p}^*_\mathrm{out}}
\newcommand{\dold}{\mathbf{d}_\mathrm{old}}
\newcommand{\dnew}{\mathbf{d}_\mathrm{new}}
\newcommand{\tikzsize}{\footnotesize}
\newcommand{\tikzscalefactor}{1}
\tikzset{bigneuron/.style={circle, draw, inner sep = 0, minimum width = 5.5ex}}
\tikzset{smallneuron/.style={circle, draw, inner sep = 0, minimum width = 4.5ex}}
\tikzset{transform/.style={fill=white, circle}}
\tikzset{connection/.style={-{Stealth}}}
\newcommand{\relu}{
	\begin{tikzpicture}
		\draw [line width=1pt] (-1.1ex,0) -- (0,0) -- (0.9ex,0.9ex);
	\end{tikzpicture}
}
\begin{document}


\RUNAUTHOR{Hertrich and Skutella}

\RUNTITLE{Provably Good Knapsack Solutions via Neural Networks of Bounded Size}

\TITLE{Provably Good Solutions to the Knapsack Problem via Neural Networks of Bounded Size\thanks{An extended abstract of this article, including Figures~1--7, appeared in the Proceedings of the AAAI Conference on Artificial Intelligence, volume 35, 7685–7693,~\citep{hertrich2021provably}; see \texttt{https://ojs.aaai.org/index.php/AAAI/article/view/16939}; copyright \copyright~2021, Association for the Advancement of Artificial Intelligence.
}}

\ARTICLEAUTHORS{%
\AUTHOR{Christoph Hertrich}
\AFF{London School of Economics and Political Science, \EMAIL{c.hertrich@lse.ac.uk}}
\AUTHOR{Martin Skutella}
\AFF{Technische Universit{\"a}t Berlin, \EMAIL{martin.skutella@tu-berlin.de}}
} 

\ABSTRACT{%
The development of a satisfying and rigorous mathematical understanding of the performance of neural networks is a major
challenge in artificial intelligence. Against this background, we study the expressive power of neural networks through
the example of the classical NP-hard Knapsack Problem. Our main contribution is a class of recurrent neural networks
(RNNs) with rectified linear units that are iteratively applied to each item of a Knapsack instance and thereby compute
optimal or provably good solution values. We show that an RNN of depth four and width depending quadratically on the
profit of an optimum Knapsack solution is sufficient to find optimum Knapsack solutions. We also prove the following
tradeoff between the size of an RNN and the quality of the computed Knapsack solution: for Knapsack instances consisting
of~$n$ items, an RNN of depth five and width~$w$ computes a solution of value at
least~\mbox{$1-\mathcal{O}(n^2/\sqrt{w})$} times the optimum solution value. Our results build upon a classical dynamic
programming formulation of the Knapsack Problem as well as a careful rounding of profit values that are also at the core
of the well-known fully polynomial-time approximation scheme for the Knapsack Problem.
A carefully conducted computational study qualitatively supports our theoretical size
bounds. Finally, we point out that our results can be generalized to many other combinatorial optimization problems that
admit dynamic programming solution methods, such as various Shortest Path Problems, the Longest Common Subsequence
Problem, and the Traveling Salesperson Problem.
}%


\KEYWORDS{Neural Networks; Expressive Power; Knapsack Problem; Dynamic Programming}

\maketitle

%


\section{Introduction}

Deep learning and neural networks (NNs) are at the heart of some of the greatest advances in modern computer science. They enable huge breakthroughs in applications like computer vision, translation, speech recognition, and
autonomous driving, to name just a few; see, e.g., \citep{LeCunBengioHinton:DeepLearning}.
%
While numerous computational studies
present impressive empirical proof of neural networks' computational power, we are still far away from a more
rigorous theoretical explanation of these observations.

Apart from the popular applications named above, it has been shown that NNs have high potential for practically solving combinatorial optimization (CO) problems or empirically improving classical solution methods \citep{BengioLodiProuvost:MLforCO}. For example, \citet{yang2018boosting} and \citet{xu2020deep} utilize NNs in order to empirically enhance \emph{dynamic programming}, a very classical CO method.
While the methods used in these papers indeed provide fast and empirically near-optimal solutions, their use of NNs makes it virtually impossible to give theoretical optimality or worst-case approximation guarantees.
Motivated by this imbalance, and focusing on the Knapsack Problem, which is a prime example of CO problems that can be solved via dynamic programming, we investigate the following question:\vspace{0.5em}

\begin{center}
	\emph{Which neural network size is theoretically sufficient\\to find solutions of provable quality for the Knapsack Problem?}
\end{center}\vspace{0.5em}

We give an answer to this question by presenting a class of carefully constructed NNs with provable quality guarantees and support our size bounds by a computational study. Finally, we argue that our approach is not at all specific for the Knapsack Problem, but can be generalized to many other CO problems, e.g., various Shortest Path Problems, the Longest Common Subsequence
Problem, and the Traveling Salesperson Problem.

It is worth to note that it is by no means obvious that classical algorithms can be translated to neural networks with bounded size. In particular, as discussed by \citet{hertrich2021relu}, even for efficiently (polynomial-time) solvable problems it is a priori unclear whether polynomially sized neural networks can compute provably good solutions.

\subsection*{The Knapsack Problem} 
The Knapsack Problem constitutes one of the oldest and most studied problems
in Combinatorial Optimization (CO). Given a set of items with certain profit and size values, as well as a Knapsack capacity, the Knapsack Problem asks for a subset of items with maximum total profit such that the total size of the subset does not exceed the capacity.

The Knapsack Problem is one of Karp's 21 original NP-complete problems \citep{Karp:21Problems} and has numerous applications in a
wide variety of fields, ranging from production and transportation, over finance and investment, to network
security and cryptography. It often appears as a subproblem at the core of more complex problems; see,
e.g.,~\citep{KellererPferschyPisinger2004, MartelloToth1990}. This fact substantiates the Knapsack Problem's
prominent importance as one of the key problems in~CO.
In particular, the Knapsack
Problem is frequently used as a testbed for measuring the progress of various exact and heuristic
solution approaches and computational methods such as, e.g., integer programming, constraint programming, or
evolutionary algorithms. In integer programming, for example, the Knapsack Problem and so-called `Knapsack
Inequalities' play a central role, both with respect to theory as well as in the development of modern
computational methods; see, e.g.,~\citep{BertsimasWeismantel05,FischettiLodi2010}. The Knapsack Problem is
therefore a natural and important object of study in order to advance our theoretical understanding of neural
networks and get closer to a rigorous explanation of their stunning success in so many applications, including miscellaneous optimization problems.

\subsection*{Related work}

The idea of using neural networks (NNs) to practically solve CO problems became popular with the
work of \citet{hopfield1985neural}. Hopfield NNs are special versions of
recurrent neural networks (RNNs) that find solutions to optimization problems by converging towards a minimum
of an energy function. \citet{Smith:NNforCOreview} reviews this early stream of research. While most authors
mainly focus on the Traveling Salesperson Problem (TSP), \citet{ohlsson1993neural} study a so-called mean field
NN for (generalizations of) the Knapsack Problem and empirically assess the quality of its solutions.

While there has been less research at the intersection of CO and NNs in the 2000s, modern advances in the area of
deep learning have boosted the interest in this direction again; see \citep{BengioLodiProuvost:MLforCO} for a general review and \citep{cappart2021combinatorial} for a focused review on graph neural networks. Common applications include speeding up solvers for
mixed-integer linear programs, for instance, by automatically learning on which variables to branch in
branch-and-bound algorithms; see \citep{lodi2017learning} for a survey. 
Machine learning has also been applied to modeling aspects of CO, as reviewed by \citet{lombardi2018boosting}, and to several
specific CO problems, where the TSP is often one of them \citep{Bello:NeuralCOwithRL, Emami:learningPermutations,	Khalil:LearningCOoverGraphs, kool2019attention, nowak:quadrAssignment, Vinyals:PointerNetworks}. The
different methods used by these authors include feedforward and recurrent neural networks, reinforcement
learning, attention mechanisms, pointer networks, graph embeddings, and graph neural networks. Expressivity results similar to the paper at hand, but using completely different techniques, have been achieved for the Maximum Flow Problem by \citet{hertrich2021relu}.

There have also been specific applications of neural networks to the Knapsack Problem.
For example,
\citet{Bello:NeuralCOwithRL} utilize an RNN trained by reinforcement learning and \citet{Gu2018pointer} use a pointer network to find empirically good knapsack solutions. \citet{li2021novel} derive heuristics inspired by game theory to solve a non-linear knapsack version whose objective function is given by a neural network.

Particularly related to our work are interactions between neural networks and dynamic programming algorithms. For example, \citet{yang2018boosting} and \citet{xu2020deep} use NNs to speed up
dynamic programming algorithms for CO problems. The key difference to our work, however, is that NNs are used as heuristics in these papers, making it virtually impossible to give any meaningful worst-case performance guarantees. Another interesting research stream deals with the learnability of algorithms. In this context, \citet{xu2020can} develop the concept of \emph{algorithmic alignment} and show that dynamic programming algorithms align well with graph neural networks. \citet{velivckovic2020neural} perform a more empirical study in this direction. These results concentrate on learnability in contrast to our focus on expressivity. Still, they agree with the message of our paper that dynamic programming is a good paradigm for bringing classical algorithms and neural networks closer together.

The recent success of deep neural networks has also triggered a lot of research on their general expressivity. As
we do in this paper, many authors focus on the simple but practically powerful model of feedforward NNs with
activations in the form of rectified linear units (ReLU). Since \citet{glorot2011deep} corroborated their empirical
success, such ReLU NNs have been
established as a standard model in machine learning within the past decade. ReLU NNs can compute any continuous piecewise linear function~\citep{Arora:DNNwithReLU, goodfellow2013maxout}. This fact implies universal approximation properties. 
Recent progress concerning the classes of piecewise linear functions representable by ReLU neural networks with a certain architecture was made by \citet{hertrich2021towards} and \citet{haase2023lower}.
Results on the exact expressivity boosted success with respect to understanding the computational complexity of the training problem for ReLU networks~\citep{Arora:DNNwithReLU, bertschinger2022training, froese2022computational, GKMR21, khalife2022neural}.
A variety of results has been achieved on depth vs.\ width tradeoffs \citep{Arora:DNNwithReLU, eldan2016power, hanin2019universal, hanin2017approximating, liang2017deep, nguyen2018neural, safran2017depth, Telgarsky15, telgarsky2016benefits, yarotsky2017error}. Closely related are
investigations concerning the number and structure of linear regions that NNs with certain size and depth may
have \citep{hanin2019complexity, montufar2014regions, pascanu2014number, raghu2017expressive}.
\citet{serra2018bounding} use mixed-integer programming for precisely counting the number of such regions.
\citet{mukherjee2017lower} prove size lower bounds to represent Boolean functions with NNs of limited depth.

\subsection*{Our contribution} 
We present a rigorous mathematical study on the expressivity of NNs through the example of the NP-hard Knapsack Problem. To this end, we show that there is a class of feedforward ReLU NNs of bounded size that compute \emph{provably} good solutions to the
NP-hard Knapsack Problem. In Section~\ref{Sec:DPNN}, we first present such an NN of depth~$\mathcal{O}(n)$ and
width~$\mathcal{O}((p^*)^2)$ that always finds the exact value of an optimum Knapsack solution. Here, $n$ is the number of items in the Knapsack instance, and $p^*$ is an a priori known upper bound on the value of an optimum solution. More
precisely, the optimum solution value is found by iteratively applying an RNN of depth four and
width~$\mathcal{O}((p^*)^2)$ to the~$n$ items of a Knapsack instance. As~$p^*$ can, e.g., be chosen as the total size of all items, the RNN's
width is pseudo-polynomially bounded in the input size of the Knapsack instance. Due to the Knapsack Problem's NP-hardness, however, there is no polynomial-size NN with polynomial-time computable weights that always finds the optimum solution value, unless P$\,=\,$NP; compare the discussion in the next subsection. 

In Section~\ref{Sec:FPTASNN}, we prove that the width of the NNs can be drastically decreased while still obtaining solution values
of provable quality in the worst case. We construct an RNN of depth five and fixed width~$w$ which, when applied iteratively to the~$n$ items of a Knapsack instance, always produces a solution value of at least $1-\mathcal{O}(n^2/\sqrt{w})$ times the optimum solution value. In particular, an~\mbox{$\varepsilon$-approximate} solution value can be guaranteed by choosing width~$w\in\mathcal{O}(n^4/\varepsilon^2)$. 
If we require the weights of the neural network to be polynomial-time computable, then the dependence of the width on~$\varepsilon$ is unavoidable, unless P$\,=\,$NP; compare the discussion in the next subsection.
To the best of our knowledge, our results establish the first rigorous tradeoff between the size of neural networks for CO problems and their worst-case solution quality. 

Even though we cannot show theoretical lower bounds beyond what is directly implied by NP-hardness, we provide empirical evidence for the superlinear dependence on~$p^*$ (and~$1/\varepsilon$) in Section~\ref{Sec:Experiments}.

The idea behind our construction of the NNs is to mimic the classical dynamic program for the Knapsack
Problem. More precisely, the output neurons of the RNN can be seen as elements of the dynamic programming state
space while the hidden neurons and the network itself implement the recursive dynamic programming
formula. Here, the main technical difficulty is to always filter out the correct entries of the previous state
space (input neurons) needed in the recursive formula. In addition, our NNs of fixed width rely on a subtle
variant of the rounding procedure that turns the pseudo-polynomial dynamic program into a
fully polynomial-time approximation scheme for the Knapsack Problem.

In this paper, the Knapsack Problem mainly serves as a prominent showcase for a novel approach to the rigorous analysis
of neural networks' expressivity. This approach is by no means specific for the Knapsack Problem. In
Section~\ref{Sec:OtherProblems}, we discuss how it can be applied to NNs for other combinatorial optimization problems
that can be solved via dynamic programming. In particular, we establish similar results for the Longest Common
Subsequence Problem, the Single-Source and All-Pairs Shortest Path Problems, as well as the NP-hard Traveling
Salesperson Problem and the Constrained Shortest Path Problem. For the latter problem one can show similar results on
the tradeoff between the size of NNs and their solution quality.

\subsection*{Uniformity and Lower Bounds Implied by NP-Hardness}

As mentioned earlier, NP-hardness makes it unlikely that we can get rid of the dependence on $p^*$ (in the exact case) and $\varepsilon$ (in the approximate case) for our constructions. However, strictly speaking, NP-hardness does not directly imply conditional lower bounds on the size of a neural network. Similar to the complexity of Boolean circuits~\citep{arora2009computational}, one needs an additional \emph{uniformity} assumption to deduce lower bounds conditioned on P$\,\neq\,$NP, which we will explain now.

We call a construction of neural networks \emph{uniform}, if one can compute the weights of the neural network in polynomial time. Having a uniform neural network construction of polynomial size to solve a certain problem implies that the respective problem can be solved in polynomial time: for a given instance, one first computes the neural network and then executes it to compute a solution to the instance. Hence, a uniform neural network construction of polynomial size for the Knapsack Problem cannot exist unless P$\,=\,$NP. This implies that the dependences on $p^*$ (in the exact case) and $\varepsilon$ (in the approximate case) are unavoidable if one aims for uniform constructions. Note that the constructions provided in this paper are trivially uniform because we explicitly provide the weights of the neural networks.

Without the uniformity assumption, however, it might be possible that P$\,\neq\,$NP is true and there still exist polynomial-size neural networks for the Knapsack Problem. We might just be unable to compute these networks (in polynomial time or at all), which is why they cannot be used to obtain polynomial-time algorithms and thus obtain a contradiction to P$\,\neq\,$NP.

\section{Preliminaries}\label{Sec:Prelim}

\subsection*{Neural networks with rectified linear units} We use definitions and notations similar to
\citep[Chapter~20]{Shalev2014:UnderstandingML}. A \emph{feedforward neural network with rectified linear
	units}, abbreviated by ReLU NN, or simply NN, is a finite, directed, acyclic graph $(V,E)$, equipped with arc
weights $w_{uv}\in\RR$, for each $(u,v)\in E$, and node biases $b_v\in\RR$, for each node~\mbox{$v\in V\setminus
	V_0$}. Here, $V_0$ is the set of nodes with in-degree zero. The nodes in $V$ are called \emph{neurons}. The
\emph{depth}~$k$ is the length of a longest path in the graph. In the following we suppose that neurons are
grouped into \emph{layers} $V=V_0\cupdot V_1 \cupdot \cdots \cupdot V_k$ such that the layer index strictly
increases along each arc.\footnote{Some authors only allow connections between successive layers. One can
	create such a structure by adding additional neurons propagating the values of neurons from former layers
	through the network. For our purposes, however, it is convenient to omit this restriction.} Further, we assume that
$V_0$ and~$V_k$ are precisely the sets of neurons with in-degree and out-degree zero, respectively.
Consequently, they are called \emph{input neurons} and \emph{output neurons}, respectively. Neurons in
$V\setminus(V_0\cup V_k)$ are called \emph{hidden neurons}. Let $n_\ell=\abs{V_\ell}$ be the number of neurons in the
$\ell$-th layer. The \emph{width} and \emph{size} of the NN are defined to be $\max\{n_1,\dots,n_{k-1}\}$ and
$\sum_{\ell=1}^{k-1} n_\ell$, respectively.

Every NN computes a function $\RR^{n_0}\to \RR^{n_k}$ as follows. Given an input vector
$x\in\RR^{n_0}$, we associate an activation $a(v)$ with every neuron~\mbox{$v\in V\setminus V_0$} and an output $o(v)$
with every neuron~\mbox{$v\in V\setminus V_k$}. First, the output values $o(v)$, \mbox{$v\in V_0$,} of the $n_0$ input
neurons equal the $n_0$ components of input vector $x$. Second, the activation of a neuron~$v\in
V\setminus V_0$ is the weighted sum of outputs of all predecessors plus its bias, that is, $a(v)=b_v +
\sum_{u\colon(u,v)\in E} w_{uv} o(u)$. Third, for each hidden neuron $v\in V\setminus(V_0\cup V_k)$, the output
is determined by $o(v)=\sigma(a(v))$, where $\sigma$ is the so-called \emph{activation function}. In this paper,
$\sigma$ is always the \emph{rectifier function} $\sigma(z)=\max\{0,z\}$ applied pointwise. Neurons having this activation function are
called \emph{rectified linear units} (ReLUs). Finally, the output vector $y\in \RR^{n_k}$ consists of the $n_k$
activation values $a(v)$ of the $n_k$ output neurons~$v\in V_k$.
Figure~\ref{Fig:Min2Num} gives an example, which will also be used as a subnetwork in later sections.
\begin{figure}[t]
	\centering
	\begin{tikzpicture}[scale=\tikzscalefactor, every node/.style={transform shape}]\tikzsize
		\node[smallneuron] (x1) at (0,5ex) {$x_1$};
		\node[smallneuron] (x2) at (0,0) {$x_2$};
		\node[smallneuron] (n1) at (11ex,5ex) {\relu};
		\node[smallneuron] (y) at (22ex,0ex) {$y$};
		\scriptsize
		\draw[connection] (x1) -- node[above, pos=0.4]{-1} (n1);
		\draw[connection] (x2) -- node[above, pos=0.4]{1} (n1);
		\draw[connection] (n1) -- node[above, pos=0.5]{-1} (y);
		\draw[connection] (x2) -- node[above, pos=0.65]{1} (y);
		
	\end{tikzpicture}
	\caption{An NN with two input neurons, labeled $x_1$ and $x_2$, one
		hidden neuron, labeled with the shape of the rectifier function, and one output neuron, labeled $y$. The arcs
		are labeled with their weights and all biases are zero. The network has depth 2, width 1, and size 1. It
		computes the function \[x \mapsto y= x_2-\max\{0,x_2-x_1\}
		=\min\{x_1,x_2\}.\]}
	\label{Fig:Min2Num}
\end{figure}
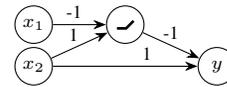

Since feedforward NNs have a fixed input size, a common way of handling sequential
inputs of arbitrary length is to use \emph{recurrent neural networks} (RNNs).
This type of NNs has become very popular,
e.g., for tasks in language~\citep{sutskever2014sequence, bahdanau2015neural} or speech processing~\citep{graves2013speech}.
Essentially, an RNN is a feedforward NN
that is used repeatedly for every piece of the input sequence and maintains a hidden state by passing (part of)
its output in each step as an additional input to the next step. More precisely, in the $i$-th step, the input of the RNN consists of the $i$-th input vector $x_{i}$, as well as, the previous hidden state vector $h_{i-1}$. In the same manner as a feedforward NN described above, it then computes the $i$-th output vector $y_i$, as well as, the new hidden state vector $h_i$. The basic structure of an RNN is shown in Figure~\ref{Fig:RNN}. Sometimes it holds that $y_i=h_i$, that is, the $i$-th output is actually equal to the $i$-th hidden state.

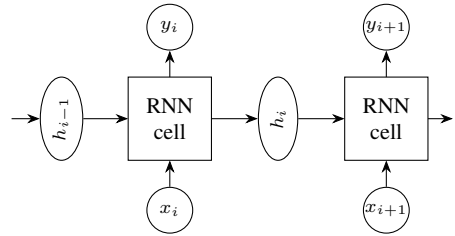
\begin{figure}[t]
	\centering
	\begin{tikzpicture}[scale=\tikzscalefactor, every node/.style={transform shape}]\tikzsize
		\node[draw, ellipse, rotate=90, minimum width = 10ex] (minus1) at (0,0) {$h_{i-1}$};
		\node[draw, ellipse, rotate=90, minimum width = 10ex] (null) at (26ex,0) {$h_i$};
		\draw [] (8ex, -5ex) rectangle (18ex, 5ex)  node [pos=0.5, align=center]{\larger RNN\\\larger cell};
		\draw [] (34ex, -5ex) rectangle (44ex, 5ex)  node [pos=0.5, align=center]{\larger RNN\\\larger cell};
		\node[bigneuron] (x0) at (13ex,-11ex) {$x_{i}$};
		\node[bigneuron] (x1) at (39ex,-11ex) {$x_{i+1}$};
		\node[bigneuron] (y0) at (13ex,11ex) {$y_{i}$};
		\node[bigneuron] (y1) at (39ex,11ex) {$y_{i+1}$};
		\draw[connection] (minus1) -- (8ex,0);
		\draw[connection] (null) -- (34ex,0);
		\draw[connection] (-6ex,0) -- (minus1);
		\draw[connection] (18ex,0) -- (null);
		\draw[connection] (44ex,0) -- (47ex,0);
		\draw[connection] (x0) -- (13ex,-5ex);
		\draw[connection] (x1) -- (39ex,-5ex);
		\draw[connection] (13ex,5ex) -- (y0);
		\draw[connection] (39ex,5ex) -- (y1);
	\end{tikzpicture}
	\caption{Basic structure of an (unfolded) RNN.}
	\label{Fig:RNN}
\end{figure}

\subsection*{Notations and Algorithms for the Knapsack Problem}
An instance of the Knapsack Problem consists of $n$ items $1,2,\dots,n$,
where each item~$i\in[n]$ comes with a given profit~$p_i\in\NN$ and size~$s_i\in\left]0,1\right]$, together
with a Knapsack that can hold any subset~$M\subseteq[n]$ of items of total size~$\sum_{i\in M}s_i$ at most~$1$.
The task is to find such a subset~$M\subseteq[n]$ that maximizes the total profit~$\sum_{i\in M}p_i$. Here and
in the following, we use $\NN\coloneqq\{1,2,3,\dots\}$ to denote the natural numbers (without zero), and for 
every~$k\in\NN$, we let~$[k]\coloneqq\{1,2,\dots,k\}$.

We outline a classical dynamic programming formulation for the Knapsack Problem.
Let $p^*$ be an upper bound on the optimum solution value, e.g., $p^*=\sum_{i=1}^{n} p_i$. For~$i\in[n]$ and~$p\in[p^*]$, let
\[
f(p,i) \coloneqq \min \left\{\sum\nolimits_{j\in M} s_j \st M\subseteq[i],~\sum\nolimits_{j\in M} p_j \geq p \right\}
\]
be the minimum size of a subset of the first $i$ items with total profit at least $p$. With $f(p,i)\coloneqq0$
for~$p\leq 0$ and $f(p,0)\coloneqq+\infty$ for $p\in[p^*]$, the values of $f$ can be computed recursively by
\begin{align}
	f(p,i)=\min\bigl\{f(p,i-1), f(p-p_i,i-1)+s_i\bigr\}\label{eq:recursion}
\end{align}
for $i\in[n]$, $p\in[p^*]$, where the first option corresponds to not using the $i$-th item, while the second option corresponds to using it.
The optimum solution value is \mbox{$\max\{p\in[p^*]\mid f(p,n)\leq 1\}$}, and the optimum subset can easily be found
by backtracking. The runtime of the dynamic program is $\mathcal{O}(np^*)$, thus pseudo-polynomial in the
input size.

Due to NP-hardness of the Knapsack Problem, one cannot expect to find an exact algorithm with polynomial running time. However, by carefully downscaling and
rounding the profit values in the dynamic program, one can obtain a \emph{fully polynomial-time approximation scheme} (FPTAS). That is, for each $\varepsilon>0$, one can compute a feasible solution with guaranteed profit of at least $1-\varepsilon$ times the optimal profit, with running time polynomial in the input size and $1/\varepsilon$. For more details, we refer to the books by \citet{Hochbaum:ApproxAlgs}, \citet{Vazirani:ApproxAlgs}, or \citet{williamson2011design}.

Usually, the Knapsack Problem is defined with integer size values $s_i\in\NN$ and some Knapsack capacity~$S\in\NN$, bounding the total size of chosen items. Dividing all item sizes by~$S$ transforms such an instance into an instance of the type considered here. For the case of integral item sizes, there is also a pseudo-polynomial dynamic programming formulation parameterized by the size instead of the profit values; see, e.g., \citep[Section~6.4]{kleinberg2006algorithm}. Our construction in Section~\ref{Sec:DPNN} can analogously be applied to this formulation. This variant, however, does not easily extend to an FPTAS. We therefore stick to the variant parametrized by the profit values as introduced above.

\section{An Exact RNN for the Knapsack Problem}\label{Sec:DPNN}
In this section we introduce the \emph{DP-NN}, an NN that exactly executes the dynamic program described in Section~\ref{Sec:Prelim}. In fact, the DP-NN is an RNN that receives the items one by one and computes the state space of the dynamic program for the items seen so far. In the first subsection we explain the high-level idea, before we provide further details in the second subsection.

\subsection*{High-level idea of the construction}

Like the dynamic program described in Section~\ref{Sec:Prelim}, the DP-NN requires a fixed upper bound~$p^*$ on the optimal objective value of the Knapsack Problem. We relax this condition in Section~\ref{Sec:FPTASNN}, when we investigate how the FPTAS for the Knapsack Problem can be implemented as an~NN.

In the $i$-th step, the DP-NN receives $p^* + 2$ inputs, namely $f(p,i-1)$ for $p\in[p^*]$, as well as~$p_{i}$ and $s_{i}$. It computes $p^*$ output values, namely $f(p,i)$ for $p\in[p^*]$. Hence, overall it has $p^*+2$ input neurons and $p^*$ output neurons. Figure~\ref{Fig:RNNStructure} illustrates the recurrent structure of the NN, which computes the state space of the dynamic program.
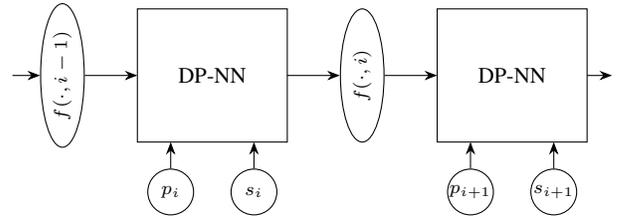
\begin{figure}[ht]
	\centering
	\begin{tikzpicture}[scale=\tikzscalefactor, every node/.style={transform shape}]\tikzsize
		\node[draw, ellipse, rotate=90, minimum width = 16ex] (minus1) at (0,0) {$f(\cdot,i-1)$};
		\node[draw, ellipse, rotate=90, minimum width = 16ex] (null) at (36ex,0) {$f(\cdot,i)$};
		\draw [] (9ex, -8ex) rectangle (27ex, 8ex)  node [pos=0.5]{\larger DP-NN};
		\draw [] (45ex, -8ex) rectangle (63ex, 8ex)  node [pos=0.5]{\larger DP-NN};
		\node[bigneuron] (p0) at (13ex,-14ex) {$p_{i}$};
		\node[bigneuron] (s0) at (23ex,-14ex) {$s_{i}$};
		\node[bigneuron] (p1) at (49ex,-14ex) {$p_{i+1}$};
		\node[bigneuron] (s1) at (59ex,-14ex) {$s_{i+1}$};
		\draw[connection] (minus1) -- (9ex,0);
		\draw[connection] (null) -- (45ex,0);
		\draw[connection] (-6ex,0) -- (minus1);
		\draw[connection] (27ex,0) -- (null);
		\draw[connection] (63ex,0) -- (66ex,0);
		\draw[connection] (p0) -- (13ex,-8ex);
		\draw[connection] (s0) -- (23ex,-8ex);
		\draw[connection] (p1) -- (49ex,-8ex);
		\draw[connection] (s1) -- (59ex,-8ex);
	\end{tikzpicture}
	\caption{Recurrent structure of the DP-NN to solve the Knapsack Problem.}
	\label{Fig:RNNStructure}
\end{figure}

In the following it is very important to distinguish fixed parameters of the NN from activation and output values of neurons that depend on the particular Knapsack instance. We denote the latter by bold symbols in order to make the difference visible. Moreover, in order to make the recurrent structure of our NN obvious, we do not use the index $i$ in the following description of the network. Instead, we denote the $n_0=p^*+2$ input values by $\fin(p)$ for $p\in[p^*]$, as well as $\pin$ and $\mysin$. The~$p^*$ output values are denoted by $\fout(p)$ for $p\in[p^*]$.
The goal is to implement the recursion
\[
\fout(p)=\min\bigl\{\fin(p),~\fin(p-\pin)+\mysin\bigr\}\quad\text{for $p\in[p^*]$}
\]
in an NN; cp.~\eqref{eq:recursion}. It consists of an addition and taking a minimum, which are both simple operations for an NN. Hence, ideally, we would like to have an architecture as depicted in Figure~\ref{Fig:NonRealizableNN} for computing~$\fout(p)$ for every $p\in[p^*]$.
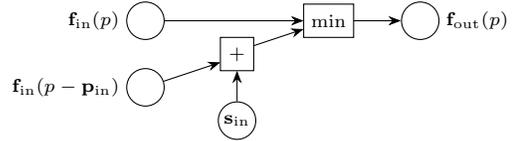
\begin{figure}[t]
	\centering
	\begin{tikzpicture}[scale=\tikzscalefactor, every node/.style={transform shape}]\tikzsize
		\node[smallneuron, label=right:{$\fout(p)$}] (out) at (44ex,0) {};
		\node[smallneuron, label=left:{$\fin(p)$}] (in) at (11ex,0) {};
		\node[smallneuron, label=left:{$\fin(p-\pin)$}] (in2) at (11ex,-8ex) {};
		\node[smallneuron] (si) at (22ex, -12ex) {$\mysin$};
		\draw (30ex,-2ex) rectangle (36ex,2ex) node [pos=0.5]{$\min$};
		\draw (20ex,-6ex) rectangle (24ex,-2ex) node [pos=0.5]{$+$};
		\draw[connection] (in2) -- (20ex,-5ex);
		\draw[connection] (si) -- (22ex, -6ex);
		\draw[connection] (24ex,-3ex) -- (30ex,-1ex);
		\draw[connection] (in) -- (30ex,0);
		\draw[connection] (36ex,0) -- (out);
	\end{tikzpicture}
	\caption{Desirable architecture for computing $\fout(p)$, $p\in[p^*]$, from the inputs. However, the existence of an edge (nonzero weight) depends critically on the input value~$\pin$, which is not allowed.}
	\label{Fig:NonRealizableNN}
\end{figure}
The problem with this is, however, that the decision which component of $\fin$ is accessed in order to compute the sum with $\mysin$ depends on the input value~$\pin$. Since we aim for an architecture that is fixed and works for general input values~$\pin$, we have to extend our construction as depicted in Figure~\ref{Fig:HighLevelNN}.
\begin{figure}[t]
	\centering
	\begin{tikzpicture}[scale=\tikzscalefactor, every node/.style={transform shape}]\tikzsize
		\node[smallneuron, label=right:{$\fout(p)$}] (out) at (44ex,0) {};
		\node[smallneuron, label=left:{$\fin(p)$}] (in) at (0,0) {};
		\node[smallneuron] (si) at (22ex, -24ex) {$\mysin$};
		\draw (30ex,-2ex) rectangle (36ex,2ex) node [pos=0.5]{$\min$};
		\draw (20ex,-6ex) rectangle (24ex,-2ex) node [pos=0.5]{$+$};
		\draw[connection] (si) -- (22ex, -6ex);
		\draw[connection] (24ex,-3ex) -- (30ex,-1ex);
		\draw[connection] (in) -- (30ex,0);
		\draw[connection] (36ex,0) -- (out);
		
		\node[smallneuron] (pi) at (11ex, -24ex) {$\pin$};
		\node[smallneuron, label=left:{$\fin(p-p')$}] (in2neu) at (0,-15ex) {};
		\node[smallneuron] (in3) at (0,-10ex) {};
		\node[smallneuron] (in4) at (0,-20ex) {};
		\draw[connection] (in3) -- (20ex,-4.5ex);
		\draw[connection] (in2neu) -- (20ex,-5ex);
		\draw[connection] (in4) -- (20ex,-5.5ex);
		\draw[fill=white] (6ex,-18ex) rectangle (16ex,-4ex) node [pos=0.5]{$p'\stackrel{?}{=}\pin$};
		\draw[connection] (pi) -- (11ex,-18ex);
	\end{tikzpicture}
	\caption{High-level idea how the DP-NN computes $\fout(p)$ for $p\in[p^*]$ from the inputs.}
	\label{Fig:HighLevelNN}
\end{figure}
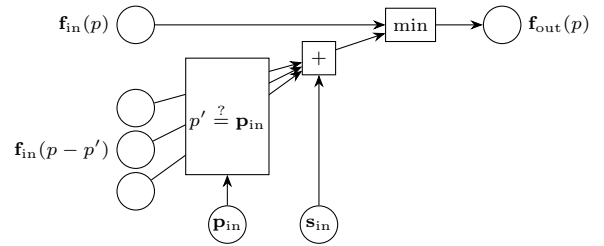
As we do not know the value of $\pin$ in advance, we connect every input neuron $\fin(p-p')$, $p'\in[p-1]$, to the unit that computes the sum $\fin(p-\pin)+\mysin$. Since we only want to take the value $\fin(p-\pin)$ into account, we need to add an additional unit that disables those connections if $p'\neq\pin$.

Due to the integrality of the profit values, this additional unit can be realized with two hidden layers and a constant number of neurons for every value of $p\in[p^*]$ and $p'\in[p-1]$, as we show in a moment. Computing the minimum adds a third hidden layer. Hence, the DP-NN has depth four while width and size are in $\mathcal{O}((p^*)^2)$.
Unfolding the RNN and viewing it as a single feedforward NN executing the whole dynamic program results in depth $\mathcal{O}(n)$ and size $\mathcal{O}(n(p^*)^2)$.
In the next subsection, we provide a detailed construction of the DP-NN and prove the following theorem.

\begin{theorem}\label{Thm:DPCorrectness}
	For a Knapsack instance with capacity $S=1$, \mbox{$s_i\in\left]0,1\right]$}, and~$p_i\in\NN$, for $i\in[n]$, with an upper bound $p^*$ on the optimal solution value, the corresponding dynamic programming values $f(p,i)$, $i\in[n]$, $p\in p^*$, can be exactly computed by iteratively applying the DP-NN $n$ times.
\end{theorem}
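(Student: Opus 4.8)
The plan is to prove the theorem by induction on the iteration count~$i$, so that it suffices to verify that a single application of the DP-NN cell correctly transforms the input vector $(\fin(p))_{p\in[p^*]}$ together with $\pin,\mysin$ into the output $\fout(p)=\min\{\fin(p),\fin(p-\pin)+\mysin\}$ matching recursion~\eqref{eq:recursion}; the boundary conditions $f(p,0)=+\infty$ and $f(p,i)=0$ for $p\leq 0$ are then fed in, respectively built into, the first iteration. Since a ReLU network outputs reals rather than $+\infty$, I would first fix a finite surrogate $M$ for $+\infty$ (any $M$ larger than the largest attainable finite size, e.g.\ $M=n+1$, as every finite $f$-value is a sum of at most $n$ sizes in $\left]0,1\right]$) and maintain the invariant that each $f(p,i)$ is represented by $\min\{M,f(p,i)\}$; capping leaves all finite values untouched and never lets an ``infinite'' entry drop below a competing finite one.

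The core of the construction is a selection gadget realizing, for fixed output index $p$ and each candidate $p'\in[p-1]$,
\[
h_{p'}=\max\{0,~\fin(p-p')-C\cdot(\max\{0,\pin-p'\}+\max\{0,p'-\pin\})\},
\]
with a fixed constant $C\geq M$. The inner bracket equals $C\cdot\lvert\pin-p'\rvert$; because $\pin\in\NN$ and $p'$ is integral, this penalty vanishes precisely when $\pin=p'$ and is at least $C\geq M$ otherwise. Hence $h_{p'}=\fin(p-p')$ if $\pin=p'$ and $h_{p'}=0$ if $\pin\neq p'$, so that $\sum_{p'\in[p-1]}h_{p'}=\fin(p-\pin)$ whenever $\pin\in[p-1]$, and equals $0$ when $\pin\geq p$ --- exactly the desired value under the convention $\fin(q)=0$ for $q\leq 0$. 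This gadget needs two hidden layers: one computing the two rectified differences $\max\{0,\pin-p'\}$ and $\max\{0,p'-\pin\}$, and one computing $h_{p'}$ itself, with a skip connection delivering the input $\fin(p-p')$ (admissible since arcs between non-successive layers are allowed). As all inputs $\fin,\mysin,\pin$ are nonnegative, they propagate through intermediate ReLUs as the identity.

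It then remains to form $\fout(p)=\min\{\fin(p),~\mysin+\sum_{p'}h_{p'}\}$. The summation is linear and costs no depth; the minimum is implemented by the two-input gadget of Figure~\ref{Fig:Min2Num}, i.e.\ $\min\{a,b\}=b-\max\{0,b-a\}$, adding one further rectifier layer $\max\{0,T(p)-\fin(p)\}$ followed by a linear output neuron. This gives three hidden layers, hence depth four. Counting neurons, output index $p$ contributes $\mathcal{O}(p)$ units in each of the first two hidden layers and $\mathcal{O}(1)$ in the third, so summing over $p\in[p^*]$ yields width and size $\mathcal{O}((p^*)^2)$, as claimed. The induction over~$i$ then establishes the theorem.

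I expect the main obstacle to lie in the selection gadget and the quantitative choice of $C$ and the surrogate $M$: one must verify simultaneously that for the correct index $p'=\pin$ the gadget reproduces $\fin(p-p')$ exactly, that every incorrect index is clamped to $0$, and that these guarantees survive composition across all $n$ iterations --- in particular that capping $+\infty$ at $M$ together with the additive term $\mysin\in\left]0,1\right]$ never corrupts a finite entry. Making the integrality-based case analysis $\lvert\pin-p'\rvert\in\{0\}\cup[1,\infty)$ clean, and bounding all intermediate activations by $M$, is the delicate part; the remaining steps (propagation of nonnegative inputs, layer and width accounting) are routine.
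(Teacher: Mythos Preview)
Your proposal is correct and follows essentially the same construction as the paper: the two-layer selection gadget $h_{p'}=\sigma(\fin(p-p')-C\lvert\pin-p'\rvert)$ is precisely the paper's $\mathbf{o}_2(p,k)=\sigma(\fin(p-k)-\mathbf{o}_1^{(+)}(k)-\mathbf{o}_1^{(-)}(k))$, and the final $\min$ layer matches as well. The only cosmetic difference is the choice of truncation constant: the paper caps at $M=2$ (hence $C=2$), which suffices because only the test $f(p,n)\leq 1$ matters and yields instance-independent weights, whereas your $M=n+1$ makes the weight $C$ depend on $n$; both choices work, but the paper's is slightly cleaner for an RNN meant to be reused across instances.
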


As discussed in the introduction, due to the NP-hardness of the Knapsack Problem, the dependence on $p^*$ cannot be avoided if one aims for exact, uniform neural network constructions, unless P$\,=\,$NP.

\subsection*{Details of the construction and correctness}

Note that for size values larger than the Knapsack capacity, which is equal to $1$ by our definition, we do not really care how large they actually are. Therefore, we define
$
\tilde{f}(p,i)=\min\{f(p,i), 2\}
$
to be the values of the dynamic program truncated at $2$. In other words, we replace all values in the interval $[2,+\infty]$ by $2$. Note that the recursion
\begin{equation}\label{Equ:RecursionTilde}
	\tilde{f}(p,i)=\min\bigl\{\tilde{f}(p,i-1), \tilde{f}(p-p_i,i-1)+s_i\bigr\}
\end{equation}
is still valid with starting values $\tilde{f}(p,i)=0$ for $p\leq 0$ and $\tilde{f}(p,0)=2$ for $p\in[p^*]$. Instead of computing the actual values of $f$, the DP-NN computes the values of $\tilde{f}$.

The DP-NN has three hidden layers. After the $n_0=p^*+2$ input neurons $\fin(p)$ for $p\in[p^*]$,~$\pin$, and $\mysin$, the first hidden layer consists of $n_1=2p^*$ neurons whose outputs are denoted by $\mathbf{o}_1^{(+)}(k)$ and $\mathbf{o}_1^{(-)}(k)$ for $k\in [p^*]$. Its role is to detect whether $k=\pin$. If yes, then both $\mathbf{o}_1^{(+)}(k)$ and~$\mathbf{o}_1^{(-)}(k)$ should be zero, otherwise at least one of them should be large (i.e., at least 2). In the second hidden layer, we have $n_2=p^*(p^*-1)/2$ neurons, denoted by $\mathbf{o}_2(p,k)$ for $p\in[p^*]$ and $k\in[p-1]$. A neuron in this layer should output $\fin(p-\pin)$ if $k=\pin$ and zero otherwise. This way, the sum~$\sum_{k=1}^{p-1} \mathbf{o}_2(p,k)$ equals $\fin(p-\pin)$. The third hidden layer has $n_3=p^*$ neurons, denoted by~$\mathbf{o}_3(p)$ for $p\in[p^*]$. It is used for computing the minimum of $\fin(p)$ and $\mysin+\fin(p-\pin)$ using several copies of the construction shown in Figure~\ref{Fig:Min2Num}. Finally, the $n_4=p^*$ output values are denoted by $\fout(p)$ for $p\in[p^*]$.
The following equations define the DP-NN.
\begin{subequations}\label{Eq:DPNN}
	\begin{align}
		\mathbf{o}_1^{(+)}(k) &= \sigma(2(\pin-k)),&&k\in[p^*],\\
		\mathbf{o}_1^{(-)}(k) &= \sigma(2(k-\pin)),&&k\in[p^*],\\
		\mathbf{o}_2(p,k) &= \sigma(\fin(p-k)-\mathbf{o}_1^{(+)}(k)-\mathbf{o}_1^{(-)}(k)),&&p\in[p^*],~k\in[p-1],\\
		\mathbf{o}_3(p) &= \sigma\left(\fin(p)-\left(\mysin+\sum\nolimits_{k=1}^{p-1} \mathbf{o}_2(p,k)\right)\right),&&p\in[p^*],\\
		\fout(p) & = \fin(p)-\mathbf{o}_3(p),&&p\in[p^*].
	\end{align}
\end{subequations}

Our next goal is to prove Theorem~\ref{Thm:DPCorrectness}, which states that the DP-NN indeed solves the Knapsack Problem exactly. We do so by going through the NN layer by layer and show what the individual layers do.

As mentioned, the role of the first hidden layer is to detect the input value of $\pin$ and to provide a large value for every $k$ that is not equal to $\pin$. The following lemma follows immediately from the construction and the properties of the rectifier function $\sigma$.

\begin{lemma}\label{Lem:lay1}
	Let $\pin\in\NN$. Then, for every $k\in[p^*]$, it holds that $\mathbf{o}_1^{(+)}(k)+\mathbf{o}_1^{(-)}(k)=0$ if and only if $k=\pin$, and $\mathbf{o}_1^{(+)}(k)+\mathbf{o}_1^{(-)}(k)\geq 2$ otherwise.
\end{lemma}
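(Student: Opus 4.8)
The plan is to observe that $\mathbf{o}_1^{(+)}(k)$ and $\mathbf{o}_1^{(-)}(k)$ are nothing but the positive and negative parts of the single quantity $2(\pin-k)$, so that their sum collapses to a multiple of an absolute value. Concretely, since $\sigma(z)=\max\{0,z\}$, I would first establish the identity
\[
\mathbf{o}_1^{(+)}(k)+\mathbf{o}_1^{(-)}(k)=\max\{0,2(\pin-k)\}+\max\{0,2(k-\pin)\}=2\abs{\pin-k}.
\]
This is verified by distinguishing the three cases $\pin>k$, $\pin<k$, and $\pin=k$: in each of the first two exactly one of the two rectifiers is active and equals $2\abs{\pin-k}$ while the other vanishes, and in the last case both vanish. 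No further computation is needed once these cases are noted.

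Given this identity, the two assertions of the lemma follow immediately. If $k=\pin$, then $\abs{\pin-k}=0$ and the sum vanishes, which yields the ``only if'' direction. Conversely, if $k\neq\pin$, then—and this is the crucial point—both $k$ and $\pin$ are integers, so $\abs{\pin-k}\geq 1$ and hence the sum is at least $2$. This simultaneously gives the ``if'' direction (a vanishing sum forces $k=\pin$) and the stated lower bound of $2$ in the unequal case.

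The only point worth flagging, which one might call the ``main obstacle'' even though it is not a genuine technical difficulty, is that the gap of exactly $2$ rather than merely some positive value rests entirely on the integrality of the profit value, encoded in the hypothesis $\pin\in\NN$. The factor $2$ in the definitions of the two neurons is chosen precisely so that an integer difference of at least $1$ is amplified to a value of at least $2$; this threshold of $2$ is exactly what the subsequent layers exploit (together with the truncation $\tilde{f}=\min\{f,2\}$) in order to filter out the wrong components of $\fin$. Beyond this observation, the lemma is an immediate consequence of the piecewise-linear form of $\sigma$.
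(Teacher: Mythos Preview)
Your proof is correct and matches the paper's approach: the paper simply states that the lemma ``follows immediately from the construction and the properties of the rectifier function $\sigma$,'' and your argument is precisely the spelled-out version of this, via the identity $\sigma(z)+\sigma(-z)=\abs{z}$ together with the integrality of $\pin$ and $k$.
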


The role of the second layer is to compute $\fin(p-\pin)$, which is needed in the dynamic program. The main difficulty of this step is that it depends on the input $\pin$ which neuron to access. Therefore, this is computed for every possible value $k$ of $\pin$ and set to zero if $k\neq\pin$. The following lemma explains how this is established.

\begin{lemma}\label{Lem:lay2}
	Let $\pin\in\NN$ and $\fin(p)\in\left]0,2\right]$ for every $p\in[p^*]$. Then, for every $p\in[p^*]$ and every $k\in[p-1]$, it holds that $\mathbf{o}_2(p,k)=\fin(p-\pin)$ if and only if $k=\pin$, and $\mathbf{o}_2(p,k)=0$ otherwise.
\end{lemma}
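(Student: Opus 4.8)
The plan is to prove both claims by a direct case distinction on whether $k=\pin$, using Lemma~\ref{Lem:lay1} to control the sum $\mathbf{o}_1^{(+)}(k)+\mathbf{o}_1^{(-)}(k)$ that is subtracted inside the rectifier in the definition of $\mathbf{o}_2(p,k)$. Before splitting into cases, I would first pin down the admissible range of the shifted index: since $k\in[p-1]$ and $p\in[p^*]$, we have $1\leq p-k\leq p-1\leq p^*-1$, so $p-k\in[p^*]$ and the hypothesis $\fin(p-k)\in\left]0,2\right]$ indeed applies to this term.

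In the case $k=\pin$, Lemma~\ref{Lem:lay1} yields $\mathbf{o}_1^{(+)}(k)+\mathbf{o}_1^{(-)}(k)=0$, so the defining equation collapses to $\mathbf{o}_2(p,k)=\sigma(\fin(p-k))$. Since $\fin(p-k)>0$, the rectifier acts as the identity and $\mathbf{o}_2(p,k)=\fin(p-k)=\fin(p-\pin)$, as required. In the complementary case $k\neq\pin$, Lemma~\ref{Lem:lay1} gives $\mathbf{o}_1^{(+)}(k)+\mathbf{o}_1^{(-)}(k)\geq 2$; combining this with $\fin(p-k)\leq 2$ shows that the argument of $\sigma$ is at most $2-2=0$, so $\mathbf{o}_2(p,k)=0$.

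The step that deserves the most care---and the only real obstacle---is recognizing how tightly the two parts of the hypothesis $\fin(p-k)\in\left]0,2\right]$ are matched to the construction. The upper bound of $2$ is exactly what the value $2$ supplied by Lemma~\ref{Lem:lay1} must dominate in order to force cancellation when $k\neq\pin$, which is precisely why the dynamic program is truncated at $2$ (the passage to $\tilde f$) before being implemented. The strict positivity plays a dual role: it makes the rectifier transparent when $k=\pin$, and it also upgrades the two displayed cases into the stated biconditional, since $\mathbf{o}_2(p,k)=\fin(p-\pin)>0$ can only occur in the case $k=\pin$. I would therefore emphasize that the lemma is really the conjunction of two mutually exclusive, exhaustive cases, and that maintaining $\fin(p)\in\left]0,2\right]$ as a layer-to-layer invariant is what keeps this argument valid throughout the network.
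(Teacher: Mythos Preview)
Your proof is correct and follows essentially the same approach as the paper: a case split on $k=\pin$ versus $k\neq\pin$, invoking Lemma~\ref{Lem:lay1} to control the subtracted term and then using the bounds $\fin(p-k)\in\left]0,2\right]$ to resolve the rectifier. Your added remarks---verifying $p-k\in[p^*]$ and explaining how strict positivity secures the biconditional---are welcome clarifications that the paper leaves implicit.
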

\begin{proof}{Proof.}
	If $k=\pin$, we obtain from Lemma~\ref{Lem:lay1} that $\mathbf{o}_1^{(+)}(k)+\mathbf{o}_1^{(-)}(k)=0$. Thus, due to nonnegativity of $\fin(p-k)$, we obtain $\mathbf{o}_2(p,k) = \sigma(\fin(p-k)) = \fin(p-k)= \fin(p-\pin)$.
	
	If $k\neq\pin$, we obtain from Lemma~\ref{Lem:lay1} that $\mathbf{o}_1^{(+)}(k)+\mathbf{o}_1^{(-)}(k)\geq 2$. Thus, due to $\fin(p-k)\leq 2$, we obtain $\fin(p-k)-\mathbf{o}_1^{(+)}(k)-\mathbf{o}_1^{(-)}(k)\leq0$, which implies $\mathbf{o}_2(p,k)=0$.\qed
\end{proof}

The purpose of the third layer is to help calculating the final minimum. More precisely, it computes how much $\fout(p)$ should be smaller than $\fin(p)$ in the following way.

\begin{lemma}\label{Lem:lay3}
	Let $\pin\in\NN$, $\mysin\in\left]0,1\right]$, and $\fin(p)\in\left]0,2\right]$ for every $p\in[p^*]$. Then $\mathbf{o}_3(p)=\max\{0,\fin(p)-\mysin\}$ for every $p\in[\pin]$ and $\mathbf{o}_3(p)=\max\{0,\fin(p)-(\fin(p-\pin)+\mysin)\}$ for every $p\in\{\pin+1,\pin+2,\dots,p^*\}$.
\end{lemma}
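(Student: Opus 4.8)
The plan is to unfold the defining equation for $\mathbf{o}_3(p)$ in~\eqref{Eq:DPNN}, substitute the closed form of the inner sum $\sum_{k=1}^{p-1}\mathbf{o}_2(p,k)$ supplied by Lemma~\ref{Lem:lay2}, and then split into two cases according to whether the index $\pin$ lies inside the summation range $[p-1]$. Recalling that $\mathbf{o}_3(p)=\sigma\bigl(\fin(p)-(\mysin+\sum_{k=1}^{p-1}\mathbf{o}_2(p,k))\bigr)$ with $\sigma(z)=\max\{0,z\}$, the whole statement reduces to evaluating that sum for each~$p$. The hypotheses $\pin\in\NN$ and $\fin(p)\in\left]0,2\right]$ are exactly what is needed to invoke Lemma~\ref{Lem:lay2}, which says each summand equals $\fin(p-\pin)$ when $k=\pin$ and vanishes otherwise.

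First I would treat the range $p\in[\pin]$, i.e.\ $p\leq\pin$. Here every index in the sum satisfies $k\leq p-1<\pin$, so no summand has $k=\pin$; by Lemma~\ref{Lem:lay2} all of them vanish and the sum equals zero (in particular the sum is empty when $p=1$). Substituting this into the definition gives $\mathbf{o}_3(p)=\sigma(\fin(p)-\mysin)=\max\{0,\fin(p)-\mysin\}$, which is the first claimed identity.

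Then, for the complementary range $p\in\{\pin+1,\dots,p^*\}$, i.e.\ $p>\pin$, the index $\pin$ now lies in $[p-1]$ because $1\leq\pin\leq p-1$. Hence exactly one summand is nonzero, namely $\mathbf{o}_2(p,\pin)=\fin(p-\pin)$, so the sum equals $\fin(p-\pin)$. Plugging this in yields $\mathbf{o}_3(p)=\sigma\bigl(\fin(p)-(\fin(p-\pin)+\mysin)\bigr)=\max\{0,\fin(p)-(\fin(p-\pin)+\mysin)\}$, completing the second case.

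Since the argument is a single substitution followed by a dichotomy, there is no genuine obstacle; the only point requiring care is the index bookkeeping that decides whether $\pin$ is among the summation indices $k\in[p-1]$. The case boundary between $p=\pin$ and $p=\pin+1$ is precisely the threshold at which $\pin$ enters the range $[p-1]$, which explains why the lemma splits the profit values exactly at $\pin$.
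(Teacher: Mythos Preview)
Your proof is correct and follows exactly the same approach as the paper's: use Lemma~\ref{Lem:lay2} to evaluate $\sum_{k=1}^{p-1}\mathbf{o}_2(p,k)$ according to whether $\pin$ lies in $[p-1]$, then apply the definition of $\sigma$. You simply spell out the index bookkeeping in more detail than the paper does.
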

\begin{proof}{Proof.}
	If $p\leq\pin$, we obtain from Lemma~\ref{Lem:lay2} that $\sum_{k=1}^{p-1}\mathbf{o}_2(p,k)=0$. If $p>\pin$, Lemma~\ref{Lem:lay2} implies $\sum_{k=1}^{p-1}\mathbf{o}_2(p,k)=\fin(p-\pin)$. Thus, the claim follows by construction of the third layer and definition of $\sigma$.\qed
\end{proof}

Now, after we have investigated the functionality of each of the hidden layers, we are able to prove the main theorem of this section.

\begin{proof}{Proof of Theorem~\ref{Thm:DPCorrectness}.}
	Using Lemma~\ref{Lem:lay3}, we obtain $\fout(p)=\min\{\fin(p), \mysin\}$ if~$p\leq\pin$ and~$\fout(p)=\min\{\fin(p), \fin(p-\pin)+\mysin\}$ if $p>\pin$. The claim follows due to the Recursion~\eqref{Equ:RecursionTilde} with the respective starting values.\qed
\end{proof}

\section{Smaller RNNs with Provable Approximation Guarantees}\label{Sec:FPTASNN}

In order to overcome the drawback due to the dependence of the network width on~$p^*$, we provide a construction, called
\emph{FPTAS-NN}, that uses less neurons, at the cost of losing optimality. Instead, we prove an approximation ratio (i.e., a worst-case bound) for
the solution value computed by the FPTAS-NN. As in the standard Knapsack FPTAS
\citep{Hochbaum:ApproxAlgs,Vazirani:ApproxAlgs,williamson2011design}, the idea of this construction is to round the
profit values if~$p^*$ becomes too large for an exact computation. Our approximation result can be interpreted as a
tradeoff between the width of the NN and the quality of the Knapsack solution obtained.
As in the previous section, we first give the high-level idea before providing details.

\subsection*{High-level idea of the construction}

Let $P\in\NN$ be a fixed number. The FPTAS-NN computes values $g(p,i)$ for every $p\in [P]$ and $i\in[n]$. These values are similar to the values $f(p,i)$ of the previous section, there is, however, one major difference.
Let $p^*_i=\sum_{j=1}^i p_j$ be the total profit of the first $i$ items. As soon as $p^*_i$ exceeds $P$, we can no longer store a required size value for every possible profit value but have to round profits instead. The granularity we want to use for rounding is $d_i\coloneqq\max\{1,p^*_i/P\}$. We construct the \mbox{FPTAS-NN} to compute values $g(p,i)$, $p\in[P]$, $i\in[n]$, such that we can guarantee the existence of a subset of~$[i]$ that has size at most $g(p,i)$ and profit at least $p\,d_i$. Moreover, this is done in such a way that the optimal solution cannot have a considerably higher profit value. That is, we prove a worst-case approximation guarantee for the solution found by the FPTAS-NN.

In addition to the values of $g$, the FPTAS-NN must also propagate the current total profit value~$p^*_i$ through the network in order to determine the rounding granularity in each step. Hence, in the $i$-th step, it receives $P+3$ inputs, namely $g(p,i-1)$ for $p\in[P]$, $p^*_{i-1}$, $p_i$, and $s_i$. It computes~$P+1$ outputs, namely $g(p,i)$ for $p\in[P]$ and $p^*_{i}$. Figure~\ref{Fig:RNNStructureII} illustrates the recurrent structure of this NN.

\begin{figure}[ht]
	\centering
	\begin{tikzpicture}[scale=\tikzscalefactor, every node/.style={transform shape}]\tikzsize
		\node[draw, ellipse, rotate=90, minimum width = 16ex] (minus1) at (0,0) {$g(\cdot,i-1)$};
		\node[draw, ellipse, rotate=90, minimum width = 16ex] (null) at (36ex,0) {$g(\cdot,i)$};
		\draw [] (9ex, -15ex) rectangle (27ex, 8ex)  node [pos=0.5]{\larger FPTAS-NN};
		\draw [] (45ex, -15ex) rectangle (63ex, 8ex)  node [pos=0.5]{\larger FPTAS-NN};
		\node[bigneuron] (pstarminus1) at (0,-12ex) {$p^*_{i-1}$};
		\node[bigneuron] (pstarnull) at (36ex,-12ex) {$p^*_{i}$};
		\node[bigneuron] (p0) at (13ex,-21ex) {$p_{i}$};
		\node[bigneuron] (s0) at (23ex,-21ex) {$s_{i}$};
		\node[bigneuron] (p1) at (49ex,-21ex) {$p_{i+1}$};
		\node[bigneuron] (s1) at (59ex,-21ex) {$s_{i+1}$};
		\draw[connection] (minus1) -- (9ex,0);
		\draw[connection] (null) -- (45ex,0);
		\draw[connection] (-6ex,0) -- (minus1);
		\draw[connection] (27ex,0) -- (null);
		\draw[connection] (63ex,0) -- (66ex,0);
		\draw[connection] (pstarminus1) -- (9ex,-12ex);
		\draw[connection] (pstarnull) -- (45ex,-12ex);
		\draw[connection] (-6ex,-12ex) -- (pstarminus1);
		\draw[connection] (27ex,-12ex) -- (pstarnull);
		\draw[connection] (63ex,-12ex) -- (66ex,-12ex);
		\draw[connection] (p0) -- (13ex,-15ex);
		\draw[connection] (s0) -- (23ex,-15ex);
		\draw[connection] (p1) -- (49ex,-15ex);
		\draw[connection] (s1) -- (59ex,-15ex);
	\end{tikzpicture}
	\caption{Recurrent structure of the FPTAS-NN for the Knapsack Problem.}
	\label{Fig:RNNStructureII}
\end{figure}
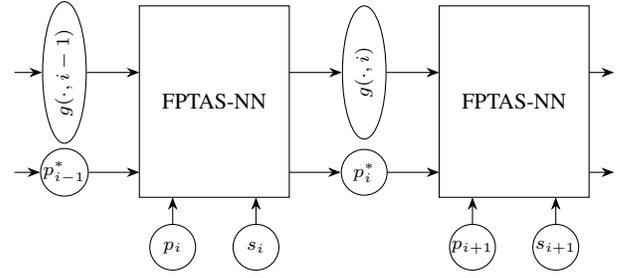

As in Section~\ref{Sec:DPNN}, we use bold symbols in order to distinguish input, activation, and output values that depend on the concrete Knapsack instance from fixed parameters of the network. We again drop the index $i$ in order to make the recurrent structure obvious. We denote the $n_0=P+3$ input parameters by $\gin(p)$, for $p\in[P]$, as well as $\pstarin$, $\pin$, and~$\mysin$. The $P+1$ output values are denoted by $\gout(p)$, for $p\in[P]$, and $\pstarout$.
Similar to the DP-NN in Section~\ref{Sec:DPNN}, the basic idea is to implement a recursion of the type 
\[
\gout(p) = \min\bigl\{\gin(p^{(1)}), \gin(p^{(2)})+\mysin\bigr\}\quad\text{for $p\in[P]$,}
\]
where the first argument of the minimum represents the option of not using item $i$, while the second one corresponds to using it. Notice, however, that $p^{(1)}$ and $p^{(2)}$ cannot simply be calculated as~$p$ and~\mbox{$p-\pin$}, respectively, since we may have to round with different granularities in two successive steps.
Therefore, the rough structure of the FPTAS-NN is as follows: first, $\pstarin$ and $\pin$ are used in order to calculate the old and new rounding granularities~$\dold=\max\{1,\pstarin/P\}$, as well as,~\mbox{$\dnew=\max\{1,(\pstarin+\pin)/P\}$}. Since this computation consists of maxima and weighted sums only, it can easily be achieved by an NN with one hidden layer. Second, the granularities are used in order to select $\gin(p^{(1)})$ and $\gin(p^{(2)})$ from the inputs. Below we give some more details on how this is done. The value of $p^{(2)}$ also depends on $\pin$. Third, the final recursion is established as in the DP-NN. In addition to $\gout(p)$, for $p\in[P]$, we also output $\pstarout=\pstarin+\pin$ in order to keep track of the rounding granularities in subsequent steps. An overview of the entire network structure is given in Figure~\ref{Fig:HighLevelNNII}.

\begin{figure}[ht]
	\centering
	\begin{tikzpicture}[scale=\tikzscalefactor, every node/.style={transform shape}]\tikzsize
		
		\renewcommand{\baselinestretch}{1} 
		
		\node[smallneuron, label=below:{$\gout(p)$}] (out) at (41ex,-2.5ex) {};
		\node[smallneuron, label=below:{$\pstarout$}] (pstarout) at (41ex,-28ex) {};
		
		\node[smallneuron] (si) at (22ex, -35ex) {$\mysin$};
		\node[smallneuron] (pi) at (-3ex, -35ex) {$\pin$};
		
		\node[smallneuron, label=above:{$\gin(P)$}] (inP) at (-24ex,5ex) {};
		\node[rotate=90] at (-24ex,0ex) {$\mathbf{\cdots}$};
		\node[smallneuron] (in2) at (-24ex,-5ex) {};
		\node[smallneuron, label=below:{$\gin(1)$}] (in1) at (-24ex,-10ex) {};
		
		\node[smallneuron, label=below:{$\pstarin$}] (pstarin) at (-24ex,-28ex) {};
		
		\draw (-16ex,-23ex) rectangle (-8ex,-14ex) node [pos=0.5, align=center]{Com-\\pute\\$\dold$};
		\draw (-7ex,-23ex) rectangle (1ex,-14ex) node [pos=0.5, align=center]{Com-\\pute\\$\dnew$};
		\draw (-5ex,-30ex) rectangle (-1ex,-26ex) node [pos=0.5]{$+$};
		
		\draw (6ex,-11ex) rectangle (16ex,-4ex) node [pos=0.5, align=center]{Select\\$\gin(p^{(2)})$};
		\draw (6ex,-1ex) rectangle (16ex,6ex) node [pos=0.5, align=center]{Select\\$\gin(p^{(1)})$};
		
		\draw (28ex,-4.5ex) rectangle (34ex,-0.5ex) node [pos=0.5]{$\min$};
		\draw (20ex,-7.5ex) rectangle (24ex,-3.5ex) node [pos=0.5]{$+$};
		
		\draw[connection] (inP) -- (6ex, 3.5ex);
		\draw[connection] (in2) -- (6ex, 2.5ex);
		\draw[connection] (in1) -- (6ex, 1.5ex);
		\draw[connection] (inP) -- (6ex, -6.5ex);
		\draw[connection] (in2) -- (6ex, -7.5ex);
		\draw[connection] (in1) -- (6ex, -8.5ex);
		
		\draw[connection] (-12.5ex, -14ex) -- (6ex, -0.5ex);
		\draw[connection] (-11.5ex, -14ex) -- (6ex, -10.5ex);
		\draw[connection] (-3.5ex, -14ex) -- (6.5ex, -1ex);
		\draw[connection] (-2.5ex, -14ex) -- (6.5ex, -11ex);
		
		\draw[connection, rounded corners] (pi) -- (0.2ex, -31.2ex) -- (8ex, -11ex);
		\draw[connection] (pstarin) -- (-5ex, -28ex);
		\draw[connection] (pi) -- (-3ex, -30ex);
		\draw[connection] (pstarin) -- (-16ex, -21ex);
		\draw[connection] (-3ex, -26ex) -- (-3ex, -23ex);
		\draw[connection] (-1ex, -28ex) -- (pstarout);
		\draw[connection] (si) -- (22ex, -7.5ex);
		
		\draw[connection] (16ex, -7.5ex) -- (20ex, -6ex);
		\draw[connection] (24ex, -5ex) -- (28ex, -3.5ex);
		\draw[connection] (16ex, 2.5ex) -- (28ex, -1.5ex);
		\draw[connection] (34ex, -2.5ex) -- (out);
		
	\end{tikzpicture}
	\caption{High-level idea how the FPTAS-NN computes $\gout(p)$, $p\in[P]$, and $\pstarout$ from the inputs.}
	\label{Fig:HighLevelNNII}
\end{figure}

Suppose we use the network for processing the $i$-th item. For each $p\in[P]$ we want to determine a (preferably small) value $\gout(p)$ such that there is a subset of~$[i]$ of total profit at least $p\,\dnew$ and total size at most $\gout(p)$. For each $p'\in[P]$, we know that there is a subset of~\mbox{$[i-1]$} of total profit at least $p'\dold$ and total size at most $\gin(p')$. We have two options: ignoring item~$i$ or using it. If we ignore it, then each $p^{(1)}$ with \mbox{$p^{(1)}\dold\geq p\,\dnew$} allows us to choose~\mbox{$\gout(p)=\gin(p^{(1)})$}. If we do use the $i$-th item, however, then each $p^{(2)}$ with the property~\mbox{$p^{(2)}\dold + \pin \geq p\,\dnew$} allows us to choose~\mbox{$\gout(p)=\gin(p^{(2)}) + \mysin$}. Hence, we want to choose $p^{(1)}$ and $p^{(2)}$ as small as possible such that these properties are fulfilled. Therefore, the units labeled \mbox{`Select~$\gin(p^{(1)})$'} and \mbox{`Select~$\gin(p^{(2)})$'} in Figure~\ref{Fig:HighLevelNNII} are constructed by setting all other connections to zero except for those belonging to the smallest values of $p^{(1)}$ and $p^{(2)}$ satisfying the above properties. Similar to how we computed~\mbox{$\fin(p-\pin)$} in the previous section, this requires two hidden layers and $\mathcal{O}(P^2)$ neurons in total.

In total, the FPTAS-NN has depth 5. The first hidden layer computes the rounding granularities, two hidden layers are required to select $\gin(p^{(1)})$ and $\gin(p^{(2)})$ and a final hidden layer computes the minimum in the actual recursion. The width and size of the FPTAS-NN are in the order of $\mathcal{O}(P^2)$.
Unfolding the RNN and viewing it as a single feedforward NN executing the whole FPTAS results in depth $\mathcal{O}(n)$ and size $\mathcal{O}(nP^2)$.

In the next subsection, we provide a formal description of the FPTAS-NN as well as proofs of the following two theorems. The first one ensures that the FPTAS-NN produces only feasible Knapsack solutions, while the second one shows that the FPTAS-NN indeed provides a fully polynomial-time approximation scheme to solve the Knapsack Problem.

\begin{theorem}\label{Thm:Feasibility}
	Suppose the FPTAS-NN is applied to a Knapsack instance with capacity~$S=1$, $s_i\in\left]0,1\right]$, and~$p_i\in\NN$, for $i\in[n]$. For every $i\in[n]$ and every $p\in[P]$ with $g(p,i)\leq 1$, there exists a subset of $[i]$ with profit at least $pd_i$ and size at most $g(p,i)$.
\end{theorem}

\begin{theorem}\label{Thm:Approx}
	For a Knapsack instance with capacity~$S=1$, $s_i\in\left]0,1\right]$, $p_i\in\NN$, for~\mbox{$i\in[n]$}, and for $\varepsilon\in\left]0,1\right]$, set $P\coloneqq\lceil n^2/\varepsilon\rceil$. Let $p^\mathrm{OPT}$ be the profit of the optimal solution and \mbox{$p^\mathrm{NN}=\max\{pd_n\mid g(p,n)\leq1\}$} be the best possible profit found by the FPTAS-NN. Then $p^\mathrm{NN}\geq(1-\varepsilon)p^\mathrm{OPT}$.
\end{theorem}

Theorem~\ref{Thm:Approx} implies a tradeoff between the width of the NN and the precision of the Knapsack solution in the following sense. For achieving an approximation ratio of $1-\varepsilon$, an NN of width $\mathcal{O}(P^2)=\mathcal{O}(n^4/\varepsilon^2)$ is required. In other words, the FPTAS-NN with fixed width $w$ achieves a worst-case approximation ratio of $1-\mathcal{O}(n^2/\sqrt{w})$.

As discussed in the introduction, assuming $\text{P}\neq \text{NP}$, it is clear that the size of a uniform~NN must grow if $\varepsilon$ tends to zero. Hence, complexity theory implies that a width-quality trade-off cannot be avoided. Still, it remains as an open question whether the growth rates implied by our construction are best possible.

\subsection*{Details of the construction and correctness}

We now formally describe the FPTAS-NN and prove that it provides provable approximation guarantees.

As for the DP-NN, we truncate the values of $g$ at 2, that is, instead of any value larger than 2 including~$+\infty$, we just use the value 2.
The FPTAS-NN is applied to a Knapsack instance in the following way. Using the initialization~$g(p,0)=2$ for $p\in[P]$ and $p^*_0=0$, for $i=1,\dots,n$, we feed the inputs~\mbox{$\gin(p)=g(p,i-1)$} for $p\in[P]$, $\pstarin=p^*_{i-1}$, $\pin=p_i$, and $\mysin=s_i$ into the network and store the outputs as~\mbox{$g(p,i)=\gout(p)$} for $p\in[P]$ and $p^*_i=\pstarout$.

\begin{subequations}
	The FPTAS-NN has four hidden layers. After the $n_0=P+3$ input neurons $\gin(p)$ for $p\in[P]$, $\pstarin$, $\pin$, and $\mysin$, the first hidden layer consists of $n_1=2$ neurons $\mathbf{o}_1^\mathrm{old}$ and $\mathbf{o}_1^\mathrm{new}$ which help to compute the rounding granularities $\dold$ and $\dnew$. They are defined as follows:
	
	\noindent\begin{align}
		\mathbf{o}_1^\mathrm{old} &= \sigma\left(\frac{\pstarin}{P}-1\right),\label{Equ:FPTAS-NN-Start}\\
		\mathbf{o}_1^\mathrm{new} &= \sigma\left(\frac{\pstarin+\pin}{P}-1\right),\\
		\dold &= \mathbf{o}_1^\mathrm{old} +1,\\
		\dnew &= \mathbf{o}_1^\mathrm{new} +1.
	\end{align}
	
	The granularities $\dold$ and $\dnew$ are just affine transformations of $\mathbf{o}_1^\mathrm{old}$ and $\mathbf{o}_1^\mathrm{new}$. Hence, they do not form an own hidden layer, because we do not apply the ReLU activation function there. The correct functionality of the first layer is ensured by the following lemma.
	
	\begin{lemma}
		The first layer of the FPTAS-NN correctly computes the rounding granularities $\dold=\max\{1,\frac{\pstarin}{P}\}$ and $\dnew=\max\{1,\frac{\pstarin+\pin}{P}\}$.
	\end{lemma}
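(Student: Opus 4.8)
The plan is to verify directly from the defining equations \eqref{Equ:FPTAS-NN-Start} that the affine transformations of the two first-layer ReLU neurons produce exactly the claimed maxima. The key observation is that the rectifier satisfies the identity $\sigma(z)+1=\max\{1,z+1\}$, since $\sigma(z)=\max\{0,z\}$ and adding $1$ to both arguments of the maximum preserves the max.

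First I would treat the old granularity. By definition, $\dold=\mathbf{o}_1^\mathrm{old}+1=\sigma\!\left(\frac{\pstarin}{P}-1\right)+1$. Applying the identity above with $z=\frac{\pstarin}{P}-1$ gives $\dold=\max\left\{0,\frac{\pstarin}{P}-1\right\}+1=\max\left\{1,\frac{\pstarin}{P}\right\}$, which is precisely the desired expression. Second, I would repeat the identical computation for the new granularity, where $z=\frac{\pstarin+\pin}{P}-1$, yielding $\dnew=\sigma\!\left(\frac{\pstarin+\pin}{P}-1\right)+1=\max\left\{1,\frac{\pstarin+\pin}{P}\right\}$.

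Since $\pstarin=p^*_{i-1}$ and $\pstarin+\pin=p^*_{i-1}+p_i=p^*_i$ by the input conventions established for the FPTAS-NN, these two quantities coincide with $\max\{1,p^*_{i-1}/P\}=d_{i-1}$ and $\max\{1,p^*_i/P\}=d_i$, matching the definition $d_i\coloneqq\max\{1,p^*_i/P\}$ from Section~\ref{Sec:FPTASNN}. This confirms that the first layer indeed computes the correct rounding granularities for the current step.

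There is essentially no obstacle here: the entire proof reduces to the single algebraic identity $\sigma(z)+1=\max\{1,z+1\}$ applied twice, so the only care needed is to track that the affine shift by $+1$ is applied outside the ReLU (and hence does not itself require an activation, as the paragraph preceding the lemma already notes). The statement follows immediately and can be written in just a few lines.
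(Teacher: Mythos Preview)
Your proof is correct and follows exactly the same approach as the paper: both reduce the claim to the elementary identity $\sigma(x-1)+1=\max\{0,x-1\}+1=\max\{1,x\}$, applied once with $x=\pstarin/P$ and once with $x=(\pstarin+\pin)/P$. Your additional remark connecting the result back to $d_{i-1}$ and $d_i$ is accurate but goes slightly beyond what the lemma itself asserts.
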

	\begin{proof}{Proof.}
		This follows from the fact that $\sigma(x-1)+1=\max\{0,x-1\}+1=\max\{1,x\}$, where $x$ equals either $\frac{\pstarin}{P}$ or $\frac{\pstarin+\pin}{P}$.\qed
	\end{proof}
	
	Hence, in the $i$-th step, if we feed the inputs $\pstarin=p^*_{i-1}$ and $\pin=p_i$ into the network, $\dold$ and $\dnew$ equal $d_{i-1}$ and $d_i$, respectively.
	
	In the second hidden layer, we have a total of $n_2=2 P^2 + 2 P$ hidden neurons, denoted by $\mathbf{o}_2^{(1+)}(p,k)$ and $\mathbf{o}_2^{(1-)}(p,k)$ for $p,k\in[P]$ with $p\leq k$, as well as, $\mathbf{o}_2^{(2+)}(p,k)$ and $\mathbf{o}_2^{(2-)}(p,k)$ for $p,k\in[P]$ with $p\geq k$, defined as follows:
	\begin{align}
		\mathbf{o}_2^{(1+)}(p,k) &= \sigma(2P(p\dnew-k\dold)),&&p,k\in[P], p\leq k,\\
		\mathbf{o}_2^{(1-)}(p,k) &= \sigma(2P((k-1)\dold-p\dnew)+2),&&p,k\in[P], p\leq k,\\
		\mathbf{o}_2^{(2+)}(p,k) &= \sigma(2P(p\dnew-k\dold-\pin)),&&p,k\in[P], p\geq k,\\
		\mathbf{o}_2^{(2-)}(p,k) &= \sigma(2P((k-1)\dold+\pin-p\dnew)+2),&&p,k\in[P], p\geq k.
	\end{align}
	
	For a fixed $p\in[P]$, let $p^{(1)}$ and $p^{(2)}$ be the smallest possible integers with $p^{(1)}\dold \geq p\dnew$ and $p^{(2)}\dold + \pin \geq p\dnew$, respectively. The task of the second layer is to detect the values $p^{(1)}$ and $p^{(2)}$, as formalized by the following two lemmas.
	
	\begin{lemma}\label{Lem:lay21}
		For each $p,k\in[P]$ with $p\leq k$, we have $\mathbf{o}_2^{(1+)}(p,k)+\mathbf{o}_2^{(1-)}(p,k)=0$ if and only if $k=p^{(1)}$. Otherwise, we have $\mathbf{o}_2^{(1+)}(p,k)+\mathbf{o}_2^{(1-)}(p,k)\geq2$.
	\end{lemma}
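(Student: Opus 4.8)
The plan is to exploit that both $\mathbf{o}_2^{(1+)}(p,k)$ and $\mathbf{o}_2^{(1-)}(p,k)$ are outputs of the rectifier $\sigma$ and hence nonnegative; consequently their sum vanishes if and only if each summand vanishes individually. I would therefore first translate the two vanishing conditions into inequalities on the rounded profit levels. Dropping the positive factor $2P$, the argument of the first ReLU is nonpositive exactly when $k\dold\geq p\dnew$, while the argument of the second ReLU is nonpositive exactly when $(k-1)\dold-p\dnew\leq-1/P$. Reading these together, I would show that both summands are zero precisely when $(k-1)\dold<p\dnew\leq k\dold$, which is exactly the defining property of the smallest integer $p^{(1)}$ with $p^{(1)}\dold\geq p\dnew$; this yields the claimed equivalence with $k=p^{(1)}$.

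The crucial step, and the one I expect to be the main obstacle, is to turn these strict inequalities into clean threshold statements and, more importantly, to establish the ``otherwise $\geq 2$'' bound. For this I would use the integrality built into the rounding granularities: since $\pstarin$ and $\pin$ are integers, the scaled granularities $P\dold=\max\{P,\pstarin\}$ and $P\dnew=\max\{P,\pstarin+\pin\}$ are integers as well. Hence $P(p\dnew-k\dold)=p\,(P\dnew)-k\,(P\dold)$ is an integer for all integers $p,k$. This is the key observation: it makes the condition $(k-1)\dold-p\dnew\leq-1/P$ equivalent to the strict inequality $(k-1)\dold<p\dnew$, and it upgrades ``strictly positive'' to ``at least $1/P$''.

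With this in hand I would finish by a short case distinction using that $p^{(1)}\geq p$ lies in the relevant range. If $k>p^{(1)}$, then $(k-1)\dold\geq p^{(1)}\dold\geq p\dnew$, so the argument of $\mathbf{o}_2^{(1-)}(p,k)$ is at least $2$, giving $\mathbf{o}_2^{(1-)}(p,k)\geq2$. If $k<p^{(1)}$, then $k\dold\leq(p^{(1)}-1)\dold<p\dnew$, so $P(p\dnew-k\dold)$ is a positive integer and therefore at least $1$; consequently $2P(p\dnew-k\dold)\geq2$, giving $\mathbf{o}_2^{(1+)}(p,k)\geq2$. In either case the sum is at least $2$, which completes the argument. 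The factor $2P$ and the additive constant $2$ in the neuron definitions are precisely calibrated so that the integrality gap of $1/P$ is amplified to the truncation value $2$.
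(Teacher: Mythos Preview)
Your proposal is correct and follows the same line as the paper's proof: both identify the vanishing of each ReLU with the inequalities $k\dold\geq p\dnew$ and $(k-1)\dold<p\dnew$, and both hinge on the key integrality observation that $P\dold$ and $P\dnew$ are integers. The only cosmetic difference is in the ``otherwise $\geq2$'' part: you do an explicit case split on $k<p^{(1)}$ versus $k>p^{(1)}$ to exhibit one summand that is at least $2$, whereas the paper simply notes that the entire sum $\mathbf{o}_2^{(1+)}(p,k)+\mathbf{o}_2^{(1-)}(p,k)$ is always an integer multiple of $2$ (since each pre-activation is an even integer), so if nonzero it must be at least $2$. The paper's version is a touch slicker, but your argument is equally valid and arguably more transparent about which neuron fires.
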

	
	\begin{proof}{Proof.}
		Obviously, it holds that $\mathbf{o}_2^{(1+)}(p,k)=0$ if and only if $k\dold \geq p\dnew$. On the other hand, using that $\dold$ and $\dnew$ are integer multiples of $\frac{1}{P}$, we obtain
		
		\noindent
		\begin{align*}
			& \mathbf{o}_2^{(1-)}(p,k)=0\\
			\Leftrightarrow\quad & (k-1)\dold \leq p\dnew - \frac{1}{P}\\
			\Leftrightarrow\quad & (k-1)\dold < p\dnew\\
			\Leftrightarrow\quad & \text{no integer $k'<k$ satisfies $k'\dold \geq p\dnew$.}
		\end{align*}
		
		This proves the first part of the claim. The second part follows because, again, $\dold$ and $\dnew$ are integer multiples of $\frac{1}{P}$ and, hence, $\mathbf{o}_2^{(1+)}(p,k)+\mathbf{o}_2^{(1-)}(p,k)$ is an integer multiple of~$2$.\qed
	\end{proof}
	
	\begin{lemma}\label{Lem:lay22}
		For each $p,k\in[P]$ with $p\geq k$, we have $\mathbf{o}_2^{(2+)}(p,k)+\mathbf{o}_2^{(2-)}(p,k)=0$ if and only if $k=p^{(2)}$. Otherwise, we have $\mathbf{o}_2^{(2+)}(p,k)+\mathbf{o}_2^{(2-)}(p,k)\geq2$.
	\end{lemma}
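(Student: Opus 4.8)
The plan is to mirror the proof of Lemma~\ref{Lem:lay21} almost verbatim, the only difference being the additive shift by $\pin$ inside both arguments. First I would analyze the two neurons separately. Since $\sigma$ vanishes exactly on the nonpositive reals, we immediately obtain $\mathbf{o}_2^{(2+)}(p,k)=0$ if and only if $p\dnew-k\dold-\pin\leq 0$, that is, $k\dold+\pin\geq p\dnew$. For the second neuron, $\mathbf{o}_2^{(2-)}(p,k)=0$ holds exactly when $2P((k-1)\dold+\pin-p\dnew)+2\leq 0$, i.e.\ when $(k-1)\dold+\pin\leq p\dnew-\frac{1}{P}$.

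The key observation, which is also the crux of the whole argument, is that $\dold$ and $\dnew$ are integer multiples of $\frac{1}{P}$ (by the preceding lemma, since $\pstarin=p^*_{i-1}$ is a natural number), and that $\pin=p_i\in\NN$ is likewise an integer multiple of $\frac{1}{P}$. Consequently every quantity of the form $a\dnew-b\dold-\pin$ appearing above is an integer multiple of $\frac{1}{P}$, so the condition $(k-1)\dold+\pin\leq p\dnew-\frac{1}{P}$ is equivalent to the strict inequality $(k-1)\dold+\pin<p\dnew$. In turn, because $\dold\geq 1>0$, this says precisely that no integer $k'<k$ satisfies $k'\dold+\pin\geq p\dnew$.

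Combining the two characterizations, both neurons vanish simultaneously if and only if $k$ itself satisfies $k\dold+\pin\geq p\dnew$ while no smaller integer does, which is exactly the definition of $k=p^{(2)}$; this establishes the first part. For the ``otherwise'' bound, I would again exploit the multiple-of-$\frac{1}{P}$ property: scaling by $2P$ turns each argument into an even integer (the additive $+2$ in $\mathbf{o}_2^{(2-)}$ preserves parity), so both $\mathbf{o}_2^{(2+)}(p,k)$ and $\mathbf{o}_2^{(2-)}(p,k)$ are nonnegative even integers, and hence so is their sum. Whenever it is nonzero it is therefore at least $2$.

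I expect the only delicate point, and the one place where the proof of Lemma~\ref{Lem:lay22} genuinely differs from that of Lemma~\ref{Lem:lay21}, to be verifying that inserting the extra term $\pin$ does not destroy the integrality structure. This is where one must invoke that profit values lie in $\NN$, so that $\pin$ is an integer (hence an integer multiple of $\frac{1}{P}$) rather than an arbitrary real; otherwise both the passage from $\leq p\dnew-\frac{1}{P}$ to the strict inequality and the final $\geq 2$ lower bound would fail.
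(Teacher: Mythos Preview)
Your proposal is correct and takes essentially the same approach as the paper, whose own proof consists of the single line ``Analogous to Lemma~\ref{Lem:lay21}.'' Your write-up is in fact more detailed than the paper's, and your observation that one must use $\pin\in\NN$ (hence an integer multiple of $\frac{1}{P}$) to preserve the integrality structure is exactly the point that makes the analogy go through.
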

	\begin{proof}{Proof.}
		Analogous to Lemma~\ref{Lem:lay21}.\qed
	\end{proof}
	
	The third hidden layer consists of $n_3=P^2 + P$ neurons $\mathbf{o}_3^{(1)}(p,k)$ for $p,k\in[P]$ with $p\leq k$, as well as $\mathbf{o}_3^{(2)}(p,k)$ for $p,k\in[P]$ with $p\geq k$. Moreover, we have again helping variables that do not form an own hidden layer because they are only affine transformations of the previous layers, namely $\mathbf{h}^{(1)}(p)$ and $\mathbf{h}^{(2)}(p)$ for $p\in[P]$.
	\begin{align}
		\mathbf{o}_3^{(1)}(p,k) &= \sigma(2-\gin(k)-\mathbf{o}_2^{(1+)}(p,k)-\mathbf{o}_2^{(1-)}(p,k)),&&p,k\in[P], p\leq k,\\
		\mathbf{o}_3^{(2)}(p,k) &= \sigma(\gin(k)-\mathbf{o}_2^{(2+)}(p,k)-\mathbf{o}_2^{(2-)}(p,k)),&&p,k\in[P], p\geq k,\\
		\mathbf{h}^{(1)}(p) &= 2-\sum_{k=p}^{P} \mathbf{o}_3^{(1)}(p,k), &&p\in[P],\\
		\mathbf{h}^{(2)}(p) &= \sum_{k=1}^{p} \mathbf{o}_3^{(2)}(p,k), &&p\in[P].
	\end{align}
	
	The idea of this layer is to compute $\gin(p^{(1)})$ and $\gin(p^{(2)})$, as the following two lemmas show.
	
	\begin{lemma}\label{Lem:lay31}
		For each $p\in[P]$, if $p^{(1)}\leq P$, we have $\mathbf{h}^{(1)}(p)=\gin(p^{(1)})$. If $p^{(1)}>P$, we have $\mathbf{h}^{(1)}(p)=2$.
	\end{lemma}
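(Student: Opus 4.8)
The plan is to trace the value $\mathbf{h}^{(1)}(p)$ back through the definition of $\mathbf{o}_3^{(1)}(p,k)$ and then invoke Lemma~\ref{Lem:lay21}, which already pins down exactly when the gating sum $\mathbf{o}_2^{(1+)}(p,k)+\mathbf{o}_2^{(1-)}(p,k)$ vanishes. The key structural observation is that in the sum $\sum_{k=p}^{P}\mathbf{o}_3^{(1)}(p,k)$ at most the single index $k=p^{(1)}$ can contribute a nonzero term, so the whole computation collapses to reading off $\gin(p^{(1)})$.

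First I would check that, whenever $p^{(1)}\leq P$, the index $p^{(1)}$ actually lies in the summation range $\{p,p+1,\dots,P\}$, i.e.\ that $p^{(1)}\geq p$. This follows because $\pin=p_i\geq 1$ forces $\dnew\geq\dold$, hence $p\dnew\geq p\dold$; consequently no integer smaller than $p$ can satisfy the defining inequality $p^{(1)}\dold\geq p\dnew$, so the smallest such integer is at least $p$.

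Next I would evaluate $\mathbf{o}_3^{(1)}(p,k)$ for each $k$ in the range. For $k=p^{(1)}$, Lemma~\ref{Lem:lay21} gives $\mathbf{o}_2^{(1+)}(p,k)+\mathbf{o}_2^{(1-)}(p,k)=0$, so the ReLU argument reduces to $2-\gin(p^{(1)})$; since all stored values are truncated at $2$ we have $\gin(p^{(1)})\leq 2$, the argument is nonnegative, and hence $\mathbf{o}_3^{(1)}(p,p^{(1)})=2-\gin(p^{(1)})$. For every other $k$ in the range, Lemma~\ref{Lem:lay21} yields a gating sum of at least $2$, so the ReLU argument is at most $-\gin(k)\leq 0$ (using $\gin(k)\geq 0$), forcing $\mathbf{o}_3^{(1)}(p,k)=0$.

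Putting these together settles both cases. If $p^{(1)}\leq P$, then $\sum_{k=p}^{P}\mathbf{o}_3^{(1)}(p,k)=2-\gin(p^{(1)})$, and the affine transformation gives $\mathbf{h}^{(1)}(p)=2-(2-\gin(p^{(1)}))=\gin(p^{(1)})$. If $p^{(1)}>P$, then $p^{(1)}$ lies outside the range, every summand vanishes, and $\mathbf{h}^{(1)}(p)=2-0=2$. The only delicate points, and the ones I would be most careful about, are the range argument $p^{(1)}\geq p$ together with the truncation bound $\gin\leq 2$: the former guarantees that exactly one summand survives, and the latter guarantees that the surviving ReLU is not clipped so that it returns precisely $2-\gin(p^{(1)})$.
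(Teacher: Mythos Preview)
Your proof is correct and follows essentially the same approach as the paper: invoke Lemma~\ref{Lem:lay21} to see that at most the single index $k=p^{(1)}$ contributes to the sum defining $\mathbf{h}^{(1)}(p)$, and then read off the two cases. You have simply filled in more detail than the paper does, in particular supplying an argument for $p^{(1)}\geq p$ (which the paper states without proof) and making explicit the use of the truncation bound $\gin\leq 2$ to ensure the surviving ReLU is not clipped.
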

	\begin{proof}{Proof.}
		Note that $p^{(1)}$ is never smaller than $p$. If $p\leq p^{(1)}\leq P$, then $\mathbf{o}_3^{(1)}(p,p^{(1)})=2-\gin(p^{(1)})$ and $\mathbf{o}_3^{(1)}(p,k)=0$ for each $k\neq p^{(1)}$ by Lemma~\ref{Lem:lay21}. If $p^{(1)}>P$, then $\mathbf{o}_3^{(1)}(p,k)=0$ for each $k$. Thus, the claim follows by the definition of $\mathbf{h}^{(1)}$.\qed
	\end{proof}
	
	\begin{lemma}\label{Lem:lay32}
		For each $p\in[P]$, if $p^{(2)}\geq 1$, we have $\mathbf{h}^{(2)}(p)=\gin(p^{(2)})$. If $p^{(2)}\leq0$, we have $\mathbf{h}^{(2)}(p)=0$.
	\end{lemma}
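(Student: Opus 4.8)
The plan is to follow the proof of Lemma~\ref{Lem:lay31}, now working with the sum $\mathbf{h}^{(2)}(p)=\sum_{k=1}^p\mathbf{o}_3^{(2)}(p,k)$. The guiding idea is that Lemma~\ref{Lem:lay22} singles out exactly one index $k$, namely $k=p^{(2)}$, for which the penalty $\mathbf{o}_2^{(2+)}(p,k)+\mathbf{o}_2^{(2-)}(p,k)$ equals $0$, while every other index incurs a penalty of at least $2$. Combined with the truncation bounds $0\leq\gin(k)\leq 2$ and the definition $\mathbf{o}_3^{(2)}(p,k)=\sigma(\gin(k)-\mathbf{o}_2^{(2+)}(p,k)-\mathbf{o}_2^{(2-)}(p,k))$, this forces $\mathbf{o}_3^{(2)}(p,p^{(2)})=\sigma(\gin(p^{(2)}))=\gin(p^{(2)})$ and $\mathbf{o}_3^{(2)}(p,k)=0$ for every $k\neq p^{(2)}$, so that the sum collapses to a single term.

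The step I expect to be the main obstacle, and the genuine difference from Lemma~\ref{Lem:lay31} where the bound $p^{(1)}\geq p$ was immediate, is to verify that the surviving index satisfies $p^{(2)}\leq p$, so that it actually lies in the summation range $[p]$. Since $p^{(2)}$ is by definition the smallest integer with $p^{(2)}\dold+\pin\geq p\dnew$, it suffices to show that $k=p$ already fulfils this, that is, $\pin\geq p(\dnew-\dold)$. I would establish the subadditivity estimate $\dnew=\max\{1,(\pstarin+\pin)/P\}\leq\max\{1,\pstarin/P\}+\pin/P=\dold+\pin/P$, whence $\dnew-\dold\leq\pin/P$; together with $p\leq P$ this yields $p(\dnew-\dold)\leq\pin$, proving $p^{(2)}\leq p$ in all cases.

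Finally I would split into the two cases of the statement. If $p^{(2)}\geq1$, then $p^{(2)}\in[p]$, so the only nonvanishing summand of $\mathbf{h}^{(2)}(p)$ is $\mathbf{o}_3^{(2)}(p,p^{(2)})=\gin(p^{(2)})$, giving $\mathbf{h}^{(2)}(p)=\gin(p^{(2)})$. If $p^{(2)}\leq0$, then every $k\in[p]\subseteq[P]$ satisfies $k\geq1>p^{(2)}$, hence $k\neq p^{(2)}$, and the ``otherwise'' branch of Lemma~\ref{Lem:lay22} (which applies because each such $k$ lies in $[P]$ with $p\geq k$) makes every summand zero, so $\mathbf{h}^{(2)}(p)=0$. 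The argument is thus essentially symmetric to Lemma~\ref{Lem:lay31}; the only real work is the granularity inequality of the second paragraph.
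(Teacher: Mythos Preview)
Your proposal is correct and follows essentially the same approach as the paper: first establish $p^{(2)}\leq p$ via the granularity inequality $p(\dnew-\dold)\leq\pin$, then use Lemma~\ref{Lem:lay22} together with $0\leq\gin(k)\leq 2$ to collapse the sum defining $\mathbf{h}^{(2)}(p)$. The only cosmetic difference is that the paper proves $\dnew-\dold\leq\pin/P$ by an explicit case split on whether $\dnew=1$, whereas you obtain it from the subadditivity estimate $\max\{1,(\pstarin+\pin)/P\}\leq\max\{1,\pstarin/P\}+\pin/P$; both arguments are equivalent.
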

	\begin{proof}{Proof.}
		We first show that $p^{(2)}$ is never larger than $p$ by proving that $p\dold + \pin \geq p\dnew$. If $\dnew=1$, then also $\dold=1$ holds and this statement is true. Otherwise, we have $\dnew = \frac{\pstarin+\pin}{P}$ and $\dold\geq\frac{\pstarin}{P}$. Hence, we obtain $p(\dnew-\dold) \leq \frac{p\pin}{P} \leq \pin$. Therefore, in any case, $p\dold + \pin \geq p\dnew$ follows, and thus also $p^{(2)}\leq p$.
		
		If $1\leq p^{(2)}\leq p$, then it follows that $\mathbf{o}_3^{(2)}(p,p^{(2)})=\gin(p^{(2)})$ and $\mathbf{o}_3^{(2)}(p,k)=0$ for each $k\neq p^{(2)}$ by Lemma~\ref{Lem:lay22}. If $p^{(2)}\leq 0$, then $\mathbf{o}_3^{(2)}(p,k)=0$ holds for each $k$. Thus, the claim follows by the definition of $\mathbf{h}^{(2)}$.\qed
	\end{proof}
	
	The fourth hidden layer is used to compute the minimum in the recursion and consists of $n_4=P$ neurons $\mathbf{o}_4(p)$ for $p\in[P]$. Finally, we output the $P$ values $\gout(p)$ for $p\in[P]$, as well as $\pstarout=\pstarin+\pin$.
	
	\noindent\begin{align}
		\mathbf{o}_4(p) &= \sigma(\mathbf{h}^{(1)}(p)-(\mysin+\mathbf{h}^{(2)}(p))), &&p\in[P],\\
		\gout(p) &= \mathbf{h}^{(1)}(p)- \mathbf{o}_4(p), &&p\in[P],\\
		\pstarout&=\pstarin+\pin.\label{Equ:FPTAS-NN-Final}
	\end{align}
	The following lemma ensures that the output value $\gout(p)$ is indeed computed by the desired recursion, provided that $\mathbf{h}^{(1)}$ and $\mathbf{h}^{(2)}$ are computed properly.
	
	\begin{lemma}\label{Lem:lay4}
		For each $p\in[P]$, we have $\gout(p)=\min\{\mathbf{h}^{(1)}(p),\mysin+\mathbf{h}^{(2)}(p)\}$.
	\end{lemma}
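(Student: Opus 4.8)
The statement to prove is Lemma~\ref{Lem:lay4}: for each $p\in[P]$, the output satisfies $\gout(p)=\min\{\mathbf{h}^{(1)}(p),\mysin+\mathbf{h}^{(2)}(p)\}$. This is exactly the same computational pattern as the subnetwork in Figure~\ref{Fig:Min2Num}, which computes $\min\{x_1,x_2\}=x_2-\sigma(x_2-x_1)$ using a single ReLU neuron and a subtraction. The plan is to recognize that the defining equations for $\mathbf{o}_4(p)$ and $\gout(p)$ instantiate precisely this minimum gadget with $x_1=\mysin+\mathbf{h}^{(2)}(p)$ and $x_2=\mathbf{h}^{(1)}(p)$.

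First I would substitute the definition of $\mathbf{o}_4(p)=\sigma(\mathbf{h}^{(1)}(p)-(\mysin+\mathbf{h}^{(2)}(p)))$ into the output equation $\gout(p)=\mathbf{h}^{(1)}(p)-\mathbf{o}_4(p)$, obtaining
\[
\gout(p)=\mathbf{h}^{(1)}(p)-\sigma\bigl(\mathbf{h}^{(1)}(p)-(\mysin+\mathbf{h}^{(2)}(p))\bigr).
\]
Then I would invoke the elementary identity $a-\sigma(a-b)=\min\{a,b\}$, which holds for all real $a,b$ because $\sigma(a-b)=\max\{0,a-b\}$, so $a-\max\{0,a-b\}=\min\{a,b\}$. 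Applying this with $a=\mathbf{h}^{(1)}(p)$ and $b=\mysin+\mathbf{h}^{(2)}(p)$ yields the claimed equality directly.

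There is no real obstacle here; the lemma is a purely local, algebraic verification that the final-layer construction matches the min-gadget of Figure~\ref{Fig:Min2Num}. The only point requiring mild care is that the identity must be stated for arbitrary reals, independent of the actual (possibly truncated) values of $\mathbf{h}^{(1)}(p)$ and $\mathbf{h}^{(2)}(p)$, so that the lemma holds unconditionally and can later be combined with Lemmas~\ref{Lem:lay31} and~\ref{Lem:lay32} (which identify $\mathbf{h}^{(1)}(p)$ with $\gin(p^{(1)})$ and $\mathbf{h}^{(2)}(p)$ with $\gin(p^{(2)})$) to conclude that the FPTAS-NN correctly realizes the intended recursion $\gout(p)=\min\{\gin(p^{(1)}),\gin(p^{(2)})+\mysin\}$.
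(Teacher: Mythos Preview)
Your proposal is correct and follows exactly the same approach as the paper: both observe that the final layer instantiates the minimum gadget of Figure~\ref{Fig:Min2Num}, with the paper's proof consisting of a single sentence to that effect while you additionally spell out the underlying identity $a-\sigma(a-b)=\min\{a,b\}$.
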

	\begin{proof}{Proof.}
		This final layer of the FPTAS-NN is constructed exactly in the same way as the~NN in Figure~\ref{Fig:Min2Num}.\qed
	\end{proof}
\end{subequations}

Equations~\eqref{Equ:FPTAS-NN-Start} to~\eqref{Equ:FPTAS-NN-Final} fully define the FPTAS-NN. We have shown several lemmas concerning the functionality of the individual layers. Now we turn towards the proofs of Theorems~\ref{Thm:Feasibility} and~\ref{Thm:Approx}.

\begin{proof}{Proof of Theorem~\ref{Thm:Feasibility}.}
	We show that the claim even holds for all values of $p$ and $i$ with $g(p,i)<2$ and not only for those with $g(p,i)\leq 1$.
	
	We use induction on $i$. For the induction start ($i=0$), nothing is to show due to the initialization $g(p,0)=2$ for all $p\in[P]$. For the induction step, suppose the claim is valid for all steps up to $i-1$.
	
	Fix some $p\in[P]$. By Lemma~\ref{Lem:lay4}, the output $g(p,i)=\gout(p)$ in the $i$-th step equals $\min\{\mathbf{h}^{(1)}(p),\mysin+\mathbf{h}^{(2)}(p)\}$. In the following, we distinguish two cases. Recall that $p^{(1)}$ and $p^{(2)}$ are the smallest possible integers with $p^{(1)}\dold \geq p\dnew$ and $p^{(2)}\dold + \pin \geq p\dnew$, respectively.
	
	\emph{Case 1:} $\mathbf{h}^{(1)}(p)\leq\mysin+\mathbf{h}^{(2)}(p)$. This implies $g(p,i)=\mathbf{h}^{(1)}(p)$. If $\mathbf{h}^{(1)}(p)=2$, nothing is to show. Otherwise, by Lemma~\ref{Lem:lay31}, we have $p^{(1)}\leq P$ with $p^{(1)}\dold\geq p\dnew$ and $g(p,i)=\mathbf{h}^{(1)}(p)=\gin(p^{(1)})=g(p^{(1)},i-1)$. By induction, we obtain that there exists a subset of $[i-1]$ with size at most $g(p,i)$ and profit at least $p^{(1)}\dold$. Hence, using the same items yields a subset of $[i]$ with size at most $g(p,i)$ and profit at least $p\dnew$. Thus, the claim is proven in this case.
	
	\emph{Case 2:} $\mathbf{h}^{(1)}(p)>\mysin+\mathbf{h}^{(2)}(p)$. This implies $g(p,i)=\mysin+\mathbf{h}^{(2)}(p)$. Note that this can only happen if $\mathbf{h}^{(2)}(p)<2$ because $\mathbf{h}^{(1)}(p)$ has at most value 2. First, suppose $p^{(2)}\leq 0$. This implies $p_i=\pin\geq p\dnew$. Hence, by using just item $i$, we obtain a subset of profit at least $p\dnew$ and size at most $s_i=\mysin\leq \mysin+\mathbf{h}^{(2)}(p) = g(p,i)$, and we are done. Second, if $p^{(2)}\geq 1$, then Lemma~\ref{Lem:lay32} implies that $g(p,i)=\mysin+\mathbf{h}^{(2)}(p)=\mysin+\gin(p^{(2)})=s_i+g(p^{(2)},i-1)$. By induction, we obtain that there exists a subset of $[i-1]$ with size at most $g(p,i)-s_i$ and profit at least $p^{(2)}\dold$. Hence, adding item $i$ to this subset yields a subset of $[i]$ with size at most $g(p,i)$ and profit at least $p^{(2)}\dold+p_i\geq p\dnew$. Thus, the claim is also proven in this case.\qed
\end{proof}

\begin{proof}{Proof of Theorem~\ref{Thm:Approx}.}
	Let $M^\mathrm{OPT}$ be an optimal solution to the Knapsack instance and $M^\mathrm{OPT}_i=M^\mathrm{OPT}\cap[i]$ be the subset of $[i]$ chosen by the optimal solution. Let $s^\mathrm{OPT}_i=\sum_{j\in M^\mathrm{OPT}_i} s_j$ be the size of $M^\mathrm{OPT}_i$ and $p^\mathrm{OPT}_i=\sum_{j\in M^\mathrm{OPT}_i} p_j$ be the profit of $M^\mathrm{OPT}_i$. The idea of the proof is that in each step, we lose at most a profit of $d_i$ compared to the optimal solution. Formally, we prove the following claim by induction on $i$: for every $i\in[n]$, and every $p\leq \left\lceil\frac{p^\mathrm{OPT}_i}{d_i}\right\rceil-i$ we have $g(p,i)\leq s^\mathrm{OPT}_i$.
	
	The induction start is settled by extending the claim to $i=0$, for which it is trivial. For the induction step, suppose the claim is valid for all steps up to $i-1$. Fix a value $p\leq \left\lceil\frac{p^\mathrm{OPT}_i}{d_i}\right\rceil-i$. Let again $p^{(1)}$ and $p^{(2)}$ be the smallest possible integers with $p^{(1)}d_{i-1} \geq pd_i$ and $p^{(2)}d_{i-1} + p_i \geq pd_i$, respectively. We distinguish two cases.
	
	\emph{Case 1:} $i\notin M^\mathrm{OPT}$, i.e., the optimal solution does not use item $i$. Observe that
	
	\noindent\begin{align*}
		pd_i&\leq \left(\left\lceil\frac{p^\mathrm{OPT}_{i}}{d_{i}}\right\rceil-i\right)d_{i}\\
		&\leq p^\mathrm{OPT}_i-(i-1)d_i\\
		&= p^\mathrm{OPT}_{i-1}-(i-1)d_i\\
		&\leq p^\mathrm{OPT}_{i-1}-(i-1)d_{i-1}\\
		&\leq \left(\left\lceil\frac{p^\mathrm{OPT}_{i-1}}{d_{i-1}}\right\rceil-(i-1)\right)d_{i-1}.
	\end{align*}
	Hence, we obtain
	\begin{equation}\label{Equ:p1bound}
		p^{(1)}\leq\left\lceil\frac{p^\mathrm{OPT}_{i-1}}{d_{i-1}}\right\rceil-(i-1)
	\end{equation}
	by the definition of $p^{(1)}$. In particular, $p^{(1)}\leq \frac{p^*_{i-1}}{d_{i-1}}\leq P$ by the definition of $d_{i-1}$. Therefore, Lemmas~\ref{Lem:lay31} and~\ref{Lem:lay4} imply $g(p,i)\leq g(p^{(1)},i-1)$. Due to Inequality~\eqref{Equ:p1bound}, it further follows by induction that $g(p,i)\leq g(p^{(1)},i-1)\leq s^\mathrm{OPT}_{i-1} = s^\mathrm{OPT}_i$, which settles the induction step in this case.
	
	\emph{Case 2:} $i\in M^\mathrm{OPT}$, i.e., the optimal solution uses item $i$. Observe that
	
	\noindent\begin{align*}
		pd_i&\leq \left(\left\lceil\frac{p^\mathrm{OPT}_{i}}{d_{i}}\right\rceil-i\right)d_{i}\\
		&\leq p^\mathrm{OPT}_i-(i-1)d_i\\
		&= p^\mathrm{OPT}_{i-1}+p_i-(i-1)d_i\\
		&\leq p^\mathrm{OPT}_{i-1}+p_i-(i-1)d_{i-1}\\
		&\leq \left(\left\lceil\frac{p^\mathrm{OPT}_{i-1}}{d_{i-1}}\right\rceil-(i-1)\right)d_{i-1}+p_i.
	\end{align*}
	
	\noindent Hence, we obtain
	\begin{equation}\label{Equ:p2bound}
		p^{(2)}\leq\left\lceil\frac{p^\mathrm{OPT}_{i-1}}{d_{i-1}}\right\rceil-(i-1)
	\end{equation}
	by the definition of $p^{(2)}$. If $p^{(2)}\leq0$, Lemmas~\ref{Lem:lay32} and~\ref{Lem:lay4} imply $g(p,i)\leq s_i \leq s^\mathrm{OPT}_i$. If $p^{(2)}\geq0$, Lemmas~\ref{Lem:lay32} and~\ref{Lem:lay4} imply $g(p,i)\leq g(p^{(2)},i-1)+s_i$. Due to Inequality~\eqref{Equ:p2bound}, we can further conclude by induction that $g(p,i)\leq g(p^{(2)},i-1)+s_i\leq s^\mathrm{OPT}_{i-1} +s_i= s^\mathrm{OPT}_i$, which finalizes the induction step.
	
	Now, making use of the claim we have just proven by induction, we obtain that~\mbox{$g\left(\left\lceil\frac{p^\mathrm{OPT}}{d_n}\right\rceil-n,n\right)\leq s^\mathrm{OPT}_n \leq 1$}. Therefore, it holds that
	\begin{equation}\label{Equ:pnnbound}
		p^\mathrm{NN}\geq \left(\left\lceil\frac{p^\mathrm{OPT}}{d_n}\right\rceil-n\right)d_n\geq p^\mathrm{OPT}-nd_n.
	\end{equation}
	
	If $d_n=1$, that is, $p^*\leq P$, then we have for all $i\in[n]$ that $d_i=1$. Hence, in each step and for each $p\in[P]$, we have $p^{(1)}=p$ and $p^{(2)}=p-p_i$. Therefore, by Lemmas~\ref{Lem:lay31}--\ref{Lem:lay4}, the FPTAS-NN behaves like the DP-NN from Section~\ref{Sec:DPNN} that executes the exact dynamic program and the theorem follows.
	
	Otherwise, if $d_n>1$, we have $d_n=\frac{p^*}{P}$. Since there must exist one item with profit at least $\frac{p^*}{n}$, we obtain $p^\mathrm{OPT}\geq\frac{p^*}{n}$ and, hence, $nd_n=\frac{np^*}{P}\leq \frac{n^2 p^\mathrm{OPT}}{P}$. Together with~\eqref{Equ:pnnbound}, this implies $\frac{p^\mathrm{NN}}{p^\mathrm{OPT}}\geq 1-\frac{n^2}{P}\geq 1-\varepsilon$.\qed
\end{proof}

\section{Empirical Evidence for Superlinear Width}\label{Sec:Experiments}

While the running time of the classical Knapsack dynamic program depends only linearly on $p^*$, the width of the \mbox{DP-NN} is of the order of $(p^*)^2$. In our construction, the quadratic factor arises from dynamically finding $\fin(p-\pin)$ in a hard-coded network, as explained in Section~\ref{Sec:DPNN}. For similar reasons, the width of the FPTAS-NN grows with $1/\varepsilon^2$ instead of~$1/\varepsilon$.

The natural question to ask is whether one can achieve a linear dependence with a different construction. While this question remains open from a purely theoretical point of view, in this section, we provide some empirical evidence that the required width might indeed grow in a superlinear manner.

We emphasize that the purpose of our empirical study is to validate our theoretical findings on the expressivity of neural networks. We do not propose a method to practically solve knapsack problems.

For details about the experimental setup, including used soft- and hardware, random data generation and systematic of seeds, training and validation setup, hyperparameters, as well as the source code, please refer to Appendix~\ref{Sec:ExpDetails}. Here we only include the necessary information to understand the key findings.

Similar to the DP-NN of Section~\ref{Sec:DPNN}, we train an NN with three hidden layers and variable width to execute one step of the Knapsack dynamic program, that is, to map $\fin$, $\pin$, and $\mysin$ to $\fout$, for random Knapsack instances. For the 30 different values $\{3,6,9,\dots,90\}$ of $p^*$, we increase the width neuron by neuron until a mean squared error (MSE) loss of at most $0.005$ is reached on a validation set. The threshold $0.005$ is carefully chosen such that NNs with reasonable width are empirically able to achieve it. In order to show that our results are robust with respect to the chosen threshold, we repeat the experiment with two different thresholds $0.00375$ and $0.0025$. Since smaller thresholds naturally make the training process more difficult, we use smaller values of $p^*$ in these cases. The empirically required width to achieve an MSE of at most the respective thresholds is plotted in Figures~\ref{Fig:experiment},~\ref{Fig:experiment60}, and~\ref{Fig:experiment30}, respectively.

\newcommand{\figscalefactor}{0.6}

\begin{figure}[t]
	\centering
	\includegraphics[width=\figscalefactor\linewidth]{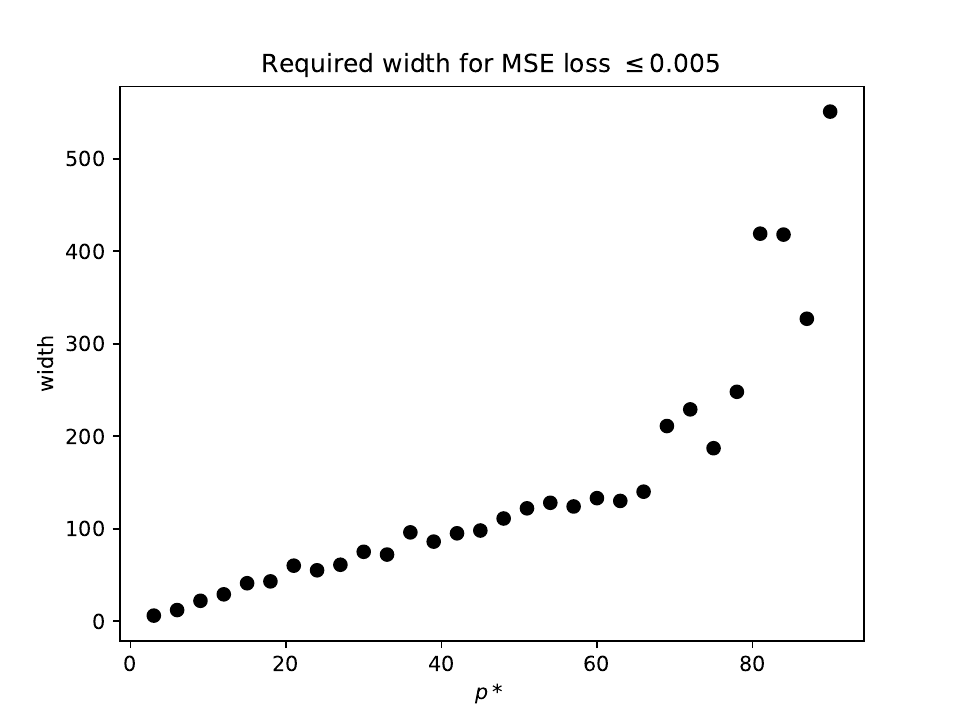}
	\caption{Required width to achieve a mean squared error of at most $0.005$ as a function of $p^*$.}
	\label{Fig:experiment}
\end{figure}

\begin{figure}[t]
	\centering
	\includegraphics[width=\figscalefactor\linewidth]{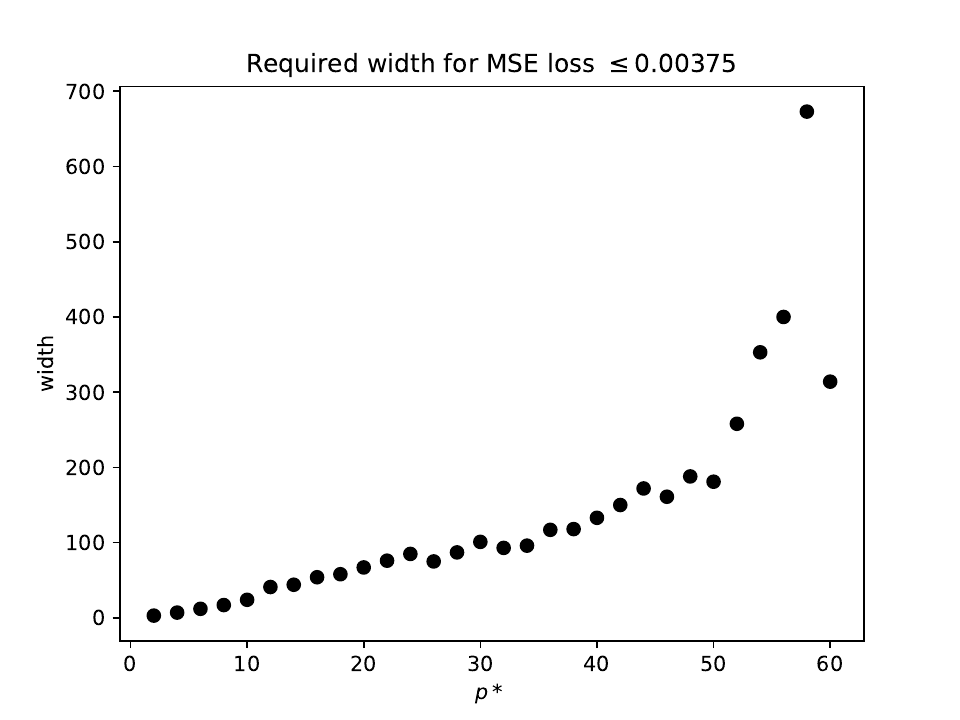}
	\caption{Required width to achieve a mean squared error of at most $0.00375$ as a function of $p^*$.}
	\label{Fig:experiment60}
\end{figure}

\begin{figure}[t]
	\centering
	\includegraphics[width=\figscalefactor\linewidth]{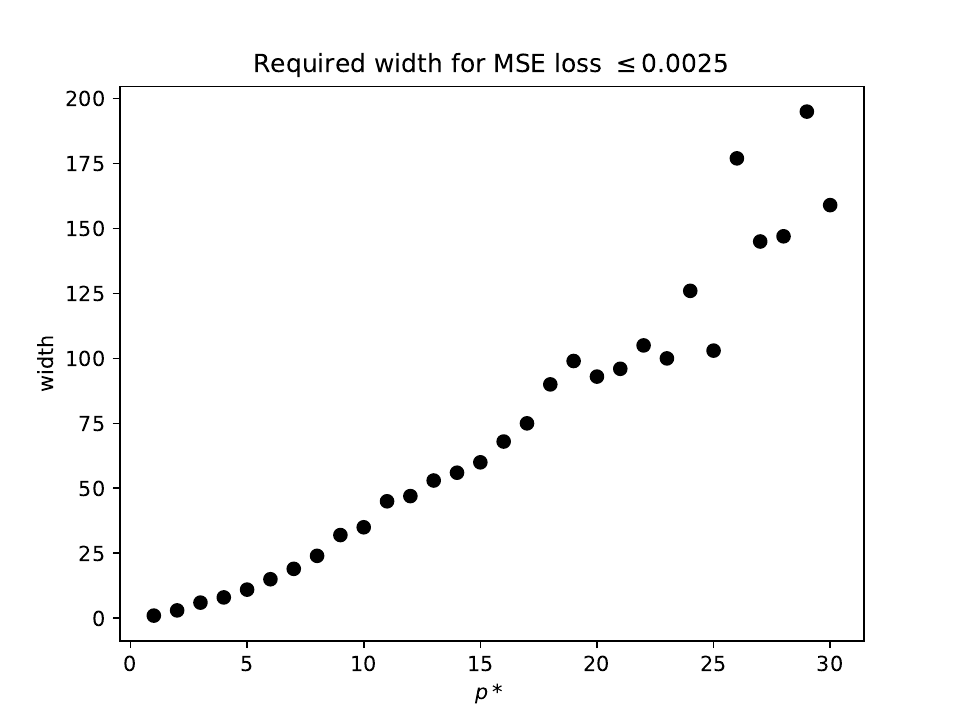}
	\caption{Required width to achieve a mean squared error of at most $0.0025$ as a function of $p^*$.}
	\label{Fig:experiment30}
\end{figure}

As one can see in these figures, the results of the experiments suggest that the dependence of the required width to reach the MSE threshold seems to grow superlinearly with $p^*$. We interpret this as a mild experimental validation of our theoretical construction which requires quadratic instead of linear width.

Of course, one should take these result with a grain of salt since the superlinear relation might have multiple reasons. For instance, it is unclear whether the difficulty to train larger networks has a stronger effect than the expressivity of ReLU NNs. Still, we find that this computational study supports our theoretical size bounds.

\section{Neural Networks for Other CO Problems}\label{Sec:OtherProblems}

In this section we demonstrate that our approach is by no means bound to the Knapsack Problem. In fact, for many other CO problems it is possible to convert a dynamic programming solution method into a provably correct NN. For certain NP-hard CO problems, a dynamic programming solution even implies the existence of a fully polynomial-time approximation scheme \citep{woeginger2000FPTAS}. This, in turn, might shed light on the tradeoff between size of corresponding NNs and their solution quality, as for the Knapsack Problem in Section~\ref{Sec:FPTASNN}. In the following we provide several examples in order to support these claims.

\subsection*{Longest Common Subsequence}
First, consider the problem of finding the length of the longest common subsequence of two finite integer sequences $x_1, \dots, x_m$ and $y_1, \dots, y_n$. A standard dynamic programming procedure, see, e.g., \citep[Section~15.4]{cormen2001introduction}, computes values $f(i,j)$ equal to the length of the longest common subsequence of the partial sequences~\mbox{$x_1, x_2, \dots, x_i$} and \mbox{$y_1, y_2, \dots, y_j$} by applying the recursion
\[
f(i,j)=\left\{
\begin{array}{ll}
	f(i-1,j-1)+1&\text{if }x_i=y_j,\\
	\max\bigl\{f(i-1,j),f(i,j-1)\bigr\}&\text{if }x_i\neq y_j.
\end{array}
\right.
\]
Since the sequence consists of integers, the check whether $x_i$ equals $y_j$ can be performed similarly to checking whether $p'=\pin$ in Section~\ref{Sec:DPNN}. The remainder of the recursion only consists of maxima and sums. Hence, computing $f(i,j)$ from $f(i-1,j-1)$, $f(i-1,j)$, $f(i,j-1)$,~$x_i$, and~$y_j$ can be realized via an NN of constant size. These basic units can be plugged together in a two-dimensional way for computing all values $f(i,j)$, $i\in[m]$, $j\in[n]$. The resulting NN can be seen as a two-dimensional RNN  of constant size that is applied in an $m$ by $n$ grid structure, an architecture introduced by \citet{graves2007multi}. Unfolding the RNN results in a feedforward NN of depth $\mathcal{O}(m+n)$ and size $\mathcal{O}(mn)$ for computing the length of the longest common subsequence.

\subsection*{Single-Source Shortest Path Problem} As a second example, we consider the Bellman-Ford algorithm for the Single-Source Shortest Path Problem, see, e.g., \citep[Section~6.8]{kleinberg2006algorithm}. If $(c_{uv})_{u,v\in V}$ is the length matrix of a graph with vertex set $V$ and $s\in V$ is the source vertex, this algorithm recursively computes values $f(i,v)$ determining the shortest possible length of a path from~$s$ to~$v$ using at most~$i$ arcs by
\[
f(i,v)=\min_{u\in V}\{f(i-1,u)+c_{uv}\}.
\]
Since this recursion consists only of sums and minima, it can be easily implemented in an NN. The sequential time complexity of the Bellman-Ford algorithm on complete digraphs with~\mbox{$n=\abs{V}$} is $\mathcal{O}(n^3)$, which can naturally be parallelized into $\mathcal{O}(n)$ rounds. Since the best known NNs for computing the minimum of $n$ numbers require $\mathcal{O}(\log n)$ depth \citep{Arora:DNNwithReLU}, there exists an NN executing the Bellman-Ford algorithm with depth $\mathcal{O}(n\log n)$ and size $\mathcal{O}(n^3\log n)$. Observe that in each round $i\in[n]$, the computation mapping the values $f(i-1,v)$, $v\in V$, to $f(i,v)$, $v\in V$, is the same. Therefore, this NN can also be seen as an RNN of depth $\mathcal{O}(\log n)$ and size $\mathcal{O}(n^2\log n)$ that is applied $n$ times.

\subsection*{All-Pairs Shortest Path Problem} Third, recall that the All-Pairs Shortest Path Problem can be solved by computing the \mbox{$(n-1)$-th} min-plus matrix power of the length matrix $(c_{uv})_{u,v\in V}$, see, e.g., \citep[Section~2.5.4]{leighton2014introduction}. By repeated squaring, this can be achieved with only $\mathcal{O}(\log n)$ min-plus matrix multiplications. For a single multiplication it is required to compute $\mathcal{O}(n^2)$ times in parallel the minimum of $n$ numbers. One of these minimum computations requires depth $\mathcal{O}(\log n)$ and size $\mathcal{O}(n\log n)$. Putting them in parallel to execute one min-plus matrix product results in depth~$\mathcal{O}(\log n)$ and size $\mathcal{O}(n^3\log n)$. Note that the whole execution consists of $\mathcal{O}(\log n)$ repetitions of the same procedure, namely squaring a matrix in the min-plus sense. Hence, this can again be viewed as an RNN with depth $\mathcal{O}(\log n)$ and size $\mathcal{O}(n^3\log n)$ that is applied $\mathcal{O}(\log n)$ times. Unfolding results in a single feedforward NN with depth $\mathcal{O}(\log^2 n)$ and size $\mathcal{O}(n^3\log^2 n)$ for solving the All-Pairs Shortest Path Problem. 

\subsection*{Constrained Shortest Path Problem} Next, consider a common generalization of the Shortest Path Problem and the Knapsack Problem, namely the NP-hard Constrained Shortest Path Problem. Here, in addition to a (nonnegative) length matrix $(c_{uv})_{u,v\in V}$, the input graph is also equipped with a (nonnegative) resource matrix $(r_{uv})_{u,v\in V}$. The task is to find a minimum length path $P$ from a source vertex $s$ to any other vertex, but this time subject to a resource constraint $\sum_{(u,v)\in P} r_{uv}\leq R$ for a given resource limit $R$. An extensive overview of solution approaches to this problem can be found, e.g., in the dissertation by \citet{ziegelmann2001constrained}. Similar to the Knapsack Problem, there exist two natural pseudo-polynomial dynamic programs, one of them parametrized by length values and the other one by resource values. Both can be implemented on an NN by combining the ideas from Section~\ref{Sec:DPNN} with the NN for the Bellmann-Ford algorithm above. We showcase this for the variant parametrized by the length values. This dynamic program recursively calculates values $f(c,v)$ representing the minimum amount of resource needed for a path from $s$ to $v$ with length at most $c$ by
\[f(c,v)=\min\bigl\{f(c-1,v),\min\nolimits_{u\in V\setminus\{v\}}\{f(c-c_{uv}, u) + r_{uv}\}\bigr\}.\]

For fixed $c$, $u$, and $v$, the term $f(c-c_{uv}, u) + r_{uv}$ can be calculated by a similar construction as we computed $\fin(p-\pin)+\mysin$ in Section~\ref{Sec:DPNN}. Assuming an upper bound~$c^*$ on the optimal objective value, this can be achieved with constant depth and $\mathcal{O}(c^*)$ width. Hence, it remains to compute a minimum of at most $n$ numbers in order to compute $f(c,v)$. Thus, each single value $f(c,v)$ can be computed with depth $\mathcal{O}(\log n)$ and size $\mathcal{O}(nc^*\log n)$. We have to compute $\mathcal{O}(nc^*)$ of these values, but for fixed $c$, all these values can be computed in parallel. Therefore, the whole dynamic program can be executed on an NN with depth $\mathcal{O}(c^* \log n)$ and a total size of $\mathcal{O}(n^2(c^*)^2\log n)$. This is pseudo-polynomial, which is the best we can hope for due to the NP-hardness of the problem, if we aim for uniform NN constructions. Moreover, similar to the Knapsack Problem, this dynamic program can be turned into an FPTAS by downscaling and rounding the length values. This observation can be used to obtain a width-quality tradeoff for the Constrained Shortest Path Problem similar to what we have shown in Section~\ref{Sec:FPTASNN}.

\subsection*{Traveling Salesperson Problem} Finally, consider the Bellman-Held-Karp algorithm for solving the (asymmetric) Traveling Salesperson Problem (TSP); see \citep{bellman1962dynamic, held1962dynamic}. Given a (complete, directed) graph with vertex set $V$ and distances $c_{uv}$ from vertex $u\in V$ to vertex $v\in V$, the TSP asks for the shortest round-trip visiting each vertex exactly once. Choosing an arbitrary starting vertex $s\in V$, the Bellman-Held-Karp algorithm recursively computes values $f(T,v)$ for each $T\subseteq V\setminus\{s\}$, $v\in T$, corresponding to the length of the shortest $s$-$v$-path visiting exactly the nodes in $T\cup\{s\}$ by the formula
\[
f(T,v)=\min_{u\in T\setminus\{v\}}\left\{f(T\setminus\{v\}, u) + c_{uv}\right\}.
\]
The length of the shortest TSP tour is then given by $\min_{u\in V\setminus\{s\}}\left\{f(V\setminus\{s\}, u) + c_{us}\right\}$. While the sequential time complexity of this algorithm on digraphs with $n=\abs{V}$ is $\mathcal{O}(n^22^n)$, in an NN we can compute the values of $f$ for all $T$ with equal cardinality in parallel. As before, computing the minimum introduces an additional factor of $\log n$ in the depth and size of the network. Hence, in total, the TSP can be solved with an NN of depth $\mathcal{O}(n\log n)$ and size $\mathcal{O}(n^22^n\log n)$. In particular, a polynomially deep NN suffices to solve the NP-hard (asymmetric) TSP.

\section{Conclusions and Future Work}\label{Sec:Conclusion}

An obvious open problem is to improve the obtained bounds on the required width of our neural network constructions. In particular, an interesting question is whether meaningful lower bounds beyond those immediately implied by the NP-hardness of the Knapsack Problem can be obtained, as suggested by our experimental results.

Another interesting direction would be to design and conduct a more sophisticated empirical study to analyze the expressivity of neural networks for the Knapsack Problem and related problems. For example, one could aim to find out whether (larger) real-world instances have a certain structure allowing that smaller neural networks capture already the dynamic programming steps for these instances. Note that an exhaustive search for the width as performed in this paper is computationally infeasible for larger instances, so such a study would require a different but still robust experimental setup.

Notice that our networks only output the solution value but not the corresponding solution, i.e., subset of
items. It is easy to see that, as for the dynamic program solving the Knapsack Problem, the subset of items can
be obtained in a straightforward way via backtracking. On the other hand, notice that it is impossible for a
ReLU~NN (without threshold gates) to output (the characteristic vector of) the optimum subset of items: while the function computed by a
ReLU~NN is piecewise linear and continuous \citep{Arora:DNNwithReLU, goodfellow2013maxout}, already infinitesimal changes
of the input (i.e., the profit values of items) might change the optimum subset of items.

Finally, an exciting direction for future research is to generalize our results of Section~\ref{Sec:OtherProblems} by describing a general procedure to convert dynamic programs into ReLU~NNs. Ideally, one could exactly classify the type of dynamic programs that guarantee the existence of a corresponding ReLU~NN. Similar in spirit, \citet{woeginger2000FPTAS} classifies dynamic programs that guarantee the existence of a fully polynomial-time approximation scheme.

\ACKNOWLEDGMENT{%
The authors would like to thank the referees for their insightful comments which helped to improve the presentation of this paper.

A large portion of this research was conducted while the first author was affiliated with TU Berlin and supported by DFG-GRK 2434 ``Facets of Complexity''. Furthermore, this work is supported by the European Research Council (ERC) under the European Union’s Horizon 2020 research and innovation programme (grant agreement ScaleOpt–757481) and by the Deutsche Forschungsgemeinschaft (DFG, German Research Foundation) under Germany's Excellence Strategy --- The Berlin Mathematics Research Center MATH+ (EXC-2046/1, project ID: 390685689).
}

%
\begin{APPENDIX}{}
	\section{Detailed Experimental Setup}
	
	\label{Sec:ExpDetails}
	In this section we describe in detail how we conducted the experiments of Section~\ref{Sec:Experiments} in the main paper.
	
	\subsection*{Hard- and software}
	All our experiments were conducted on a Windows 10 Enterprise (version 1909) machine with an Intel Core i5-8500 6-Core 64-bit CPU and 16 GB RAM. We use Python 3.8.13 with Numpy 1.22.3, and Tensorflow~2.3.0 in CPU-only mode.
	
	\subsection*{Generation of random Knapsack instances}
	For a given value of $p^*$ we sample a set of items of total profit \mbox{$\sum p_i = p^*$} in the following way: the profit of the $i$-th item is always chosen uniformly at random among all integer values between $1$ and $p^*-\sum_{i'=1}^{i-1} p_{i'}$. This is repeated until all profits sum up to $p^*$. We chose this procedure in order to make it likely to have both, very profitable and less profitable items within one instance. Finally, we shuffle the order of the items. For each item, we then pick a size value uniformly in the interval $[0,1]$ and normalize these values such that their sum equals a uniformly at random chosen value $\sum s_i\in\left]1,2\right[$. We certainly want $\sum s_i>1$, because otherwise all items would fit into the Knapsack. On the other hand, $\sum s_i<2$ makes sense, because in our DP-NN (compare Section~\ref{Sec:DPNN}), we use $2$ as a placeholder for $+\infty$.
	
	\subsection*{Preparation of the training set} Since we can generate arbitrarily many random Knapsack instances, we use an infinite training set and never train on the same data point twice. A Knapsack instance with $n$ items yields $n$ training points, namely one for each step of the dynamic program. In order to avoid having the $n$ training points belonging to one instance in successive order, we generate training points belonging to several instances and shuffle them.
	
	\subsection*{Neural network architecture} For a given value $p^*$ and width $w$, the corresponding neural network used consists of an input layer with $p^*+2$ neurons (corresponding to the $p^*$ values of the previous dynamic programming state, as well as, the scalar profit and size values), three hidden layers with $w$ neurons each and ReLU activations, as well as an output layer with $p^*$ neurons (corresponding to the new state of the dynamic programming) without further activation function. That way, we choose the same depth as in the DP-NN (Section~\ref{Sec:DPNN}), but do not employ the specific knowledge about the widths of the three hidden layers. As in the DP-NN, each layer is not only connected to the previous layer, but also receives direct connections from the input layer. In total, by our results of Section~\ref{Sec:DPNN}, this architecture is theoretically able to exactly represent the dynamic programming transition function if $w\geq p^*(p^*-1)/2$.
	
	\subsection*{Training and validating a specific network} For a given value $p^*$ and width $w$, we train the neural network architecture described above as follows. We train in epochs of $1000$ batches with batch size $32$ using mean squared error (MSE) loss and the Adam optimizer, which is a robust standard choice. It makes sense to use MSE loss as it punishes errors in both directions equally hard and large errors harder than small errors. All other (hyper-)parameters are left to the default settings of Tensorflow, which empirically works quite well for our problem type and size. It takes between 8 and 30 seconds to train one epoch with our machine setup. We train until there are two successive epochs without improvement in training loss, which empirically happens after 10 to 80 epochs. Using a validation set that is randomly generated in the same way as the training set, we evaluate the trained network on 1000 batches of size 32 each. The resulting mean squared error is our final result for the given values of $p^*$ and~$w$.
	
	\subsection*{Finding the required width} For each of our three MSE thresholds and each value $p^*$, we always train networks with increasing widths $1,2,3, \dots$ as described above until a network achieves a validation MSE less or equal to the threshold.
	
	\subsection*{Seed generation} In order to ensure reproducibility of our experiments, each time before we train and validate an NN with given value $p^*$ and width $w$, we reset the random seeds of both Numpy and Tensorflow to $257\cdot p^* + w$, where $257$ is just an arbitrary prime number. Note that these packages only guarantee the same result of random experiments if the same package versions are used.
	
	
	\subsection*{Source Code} The source code is publicly available at \url{https://github.com/ChristophHertrich/neural-knapsack}. There, the file \texttt{README.md} explains how the code can be used to reproduce the experiments of this paper.
\end{APPENDIX}
%
%


\bibliographystyle{informs2014} 
\bibliography{knapsack} 


\end{document}